\documentclass{article} 
\usepackage{iclr2026_conference,times}

\usepackage{amsmath,amsfonts,bm}

\def\eqref#1{equation~\ref{#1}}

\def\1{\bm{1}}

\def\rvb{{\mathbf{b}}}

\def\rvr{{\mathbf{r}}}

\def\rvw{{\mathbf{w}}}
\def\rvx{{\mathbf{x}}}
\def\rvy{{\mathbf{y}}}

\def\mI{{\bm{I}}}

\DeclareMathAlphabet{\mathsfit}{\encodingdefault}{\sfdefault}{m}{sl}
\SetMathAlphabet{\mathsfit}{bold}{\encodingdefault}{\sfdefault}{bx}{n}

\def\gL{{\mathcal{L}}}

\def\gU{{\mathcal{U}}}

\def\gX{{\mathcal{X}}}

\def\sP{{\mathbb{P}}}

\def\sR{{\mathbb{R}}}

\newcommand{\R}{\mathbb{R}}

\newcommand{\meanp}[2]{\mathbb{E}_{#1} \left\lbrack #2 \right\rbrack}

\newcommand{\norm}[1]{\lVert#1\rVert}
\newcommand{\normscaled}[1]{\left\lVert#1\right\rVert}

\def\bridge{\rvb\rvr}

\usepackage{hyperref}
\usepackage{url}
\usepackage[capitalize,noabbrev]{cleveref}

\usepackage{microtype}
\usepackage{graphicx}
\usepackage{subcaption}
\usepackage{booktabs} 

\usepackage{amsmath}
\usepackage{amssymb}
\usepackage{mathtools}
\usepackage{amsthm}

\usepackage[normalem]{ulem}

\title{Improved Constrained Generation by Bridging Pretrained Generative Models}

\author{Xiaoxuan Liang$^{1,2}$, Saeid Naderiparizi$^{1,2}$, Yunpeng Liu$^{1,2}$, 
\\ \textbf{Berend Zwartsenberg}$^{2}$, \textbf{Frank Wood}$^{1,2}$\\
$^{1}$University of British Columbia, $^{2}$Inverted AI\\
\texttt{liang51@cs.ubc.ca} \\
}

\theoremstyle{plain}
\newtheorem{theorem}{Theorem}[section]

\theoremstyle{definition}

\theoremstyle{remark}

\iclrfinalcopy 
\def\method{MBM++}
\begin{document}

\maketitle

\begin{abstract}
  Constrained generative modeling is fundamental to applications such as robotic control  and autonomous driving, where models must respect physical laws and safety-critical constraints. In real-world settings, these constraints rarely take the form of simple linear inequalities, but instead complex feasible regions that resemble road maps or other structured spatial domains. We propose a constrained generation framework that generates samples directly within such feasible regions while preserving realism. Our method fine-tunes a pretrained generative model to enforce constraints while maintaining generative fidelity. Experimentally, our method exhibits characteristics distinct from existing fine-tuning and training-free constrained baselines, revealing a new compromise between constraint satisfaction and sampling quality.
\end{abstract}

\section{Introduction}
Diffusion models~\citep{song2019generative, ho2020denoising, songscore, karras2022elucidating} and flow matching~\citep{liu2022flow, albergo2022building, lipman2022flow} have emerged as powerful generative frameworks, achieving strong performance across a wide range of high-dimensional generation tasks~\citep{rombach2022high}. Their success arises from learning a data-driven generative process that accurately captures complex distributions through denoising or velocity field prediction. However, when deployed in safety-critical or physics-constrained domains, such as robotic control~\citep{schulman2014motion, carvalho2023motion} and autonomous driving~\citep{niedoba2024diffusion, yang2024diffusion, zheng2025diffusion}, samples produced by pretrained models often violate task-specific constraints, including collision avoidance and drivable area compliance. Addressing these violations is essential for deploying generative models in real-world systems.

A key challenge in constrained generation is how to incorporate constraint information into the generative process without disrupting the learned data distribution. Many constraints encountered in practical domains are highly nonlinear and state-dependent, and are often specified implicitly via loss functions rather than explicit, closed-form feasible sets. In contrast, some recent constrained generation settings assume simple linear or explicitly parameterized constraints. For example, some works assume linear watermarking constraints~\citep{liu2023mirror} or analytically defined feasibility regions in multi-agent planning~\citep{liang2025simultaneous}, which admit direct projection or closed-form enforcement. Such assumptions, however, rarely hold in realistic scenarios where feasibility depends on complex geometry, interactions, or learned environment representations. Bridging this gap requires constraint signals that are defined in data space, expressive enough to capture complex feasibility criteria and simultaneously compatible with the dynamics of diffusion and flow-matching models.

Motivated by these challenges, we propose a principled framework for constrained generative modeling that integrates implicit, loss-based constraints directly into the training dynamics of pretrained diffusion and flow-matching models. Rather than enforcing feasibility through explicit projection, our approach instead adapts the generative process to better align generated samples with constraint satisfaction.

Our contributions are as follows:
\begin{enumerate}
    \item We propose \method{}, a fine-tuning framework for constrained generative modeling. The method evaluates constraint losses on the one-step denoised estimate and uses the resulting guided vector to guide sampling. The guided vector is optimized jointly with the original denoising score matching or flow matching objective. Constraint information is injected via a lightweight MLP-based embedding, while the pretrained backbone remains fixed. The design enables stable constraint enforcement without explicit manifold projection. The framework applies to both diffusion and flow-matching models. We refer to this module as a bridge embedding.
    
    \item We empirically show that ~\method{} exhibits behavior distinct from both training-free guidance and fine-tuning baselines, revealing a different compromise between constraint satisfaction and sampling quality.
\end{enumerate}

\begin{figure*}[t]
    \centering
    \begin{subfigure}[b]{0.32\textwidth}
        \centering
       \includegraphics[width=1\textwidth]{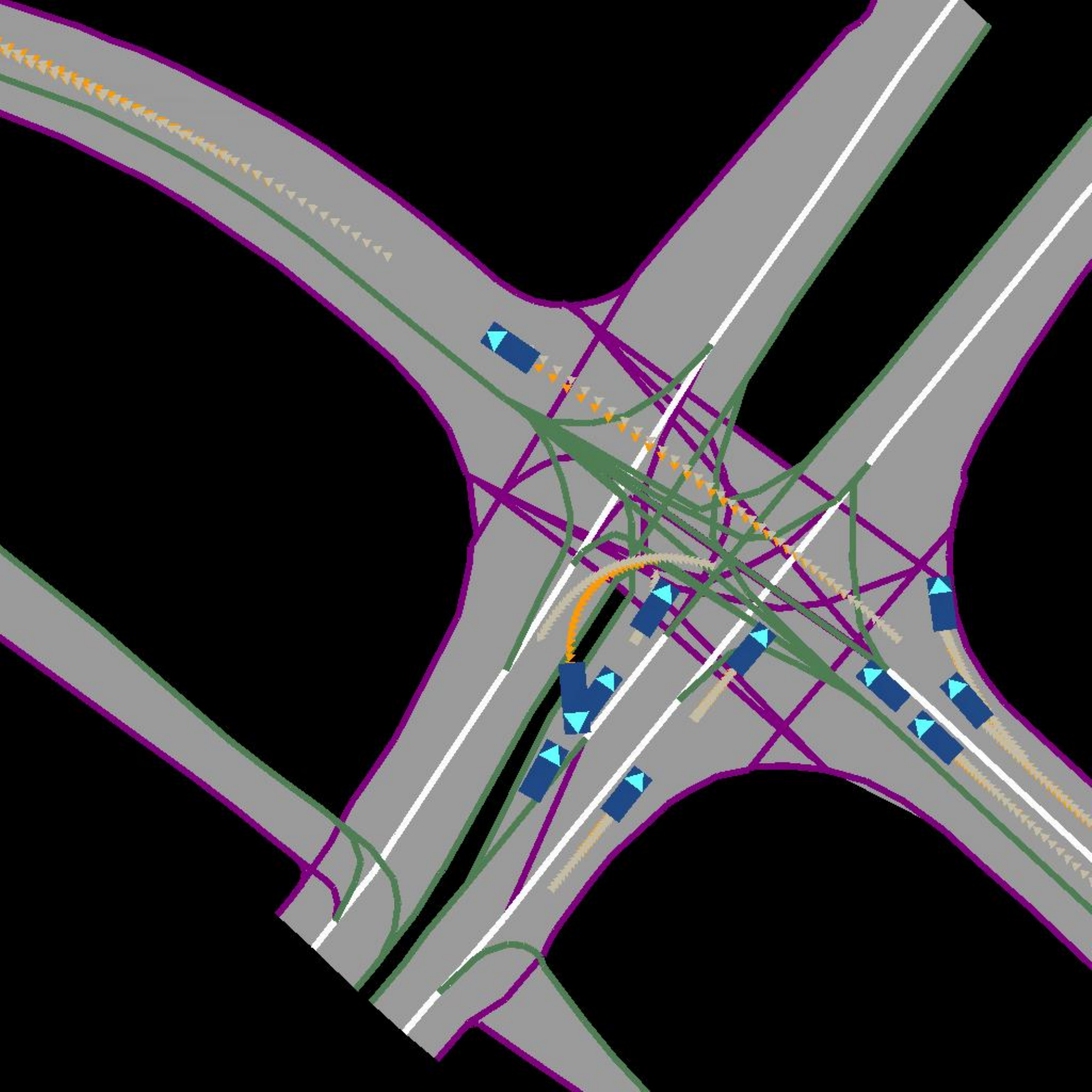}
       \caption{Standard Diffusion}
        \label{fig:banner-a}
    \end{subfigure}
    \hfill
    \begin{subfigure}[b]{0.32\textwidth}
        \centering
        \includegraphics[width=1\textwidth]{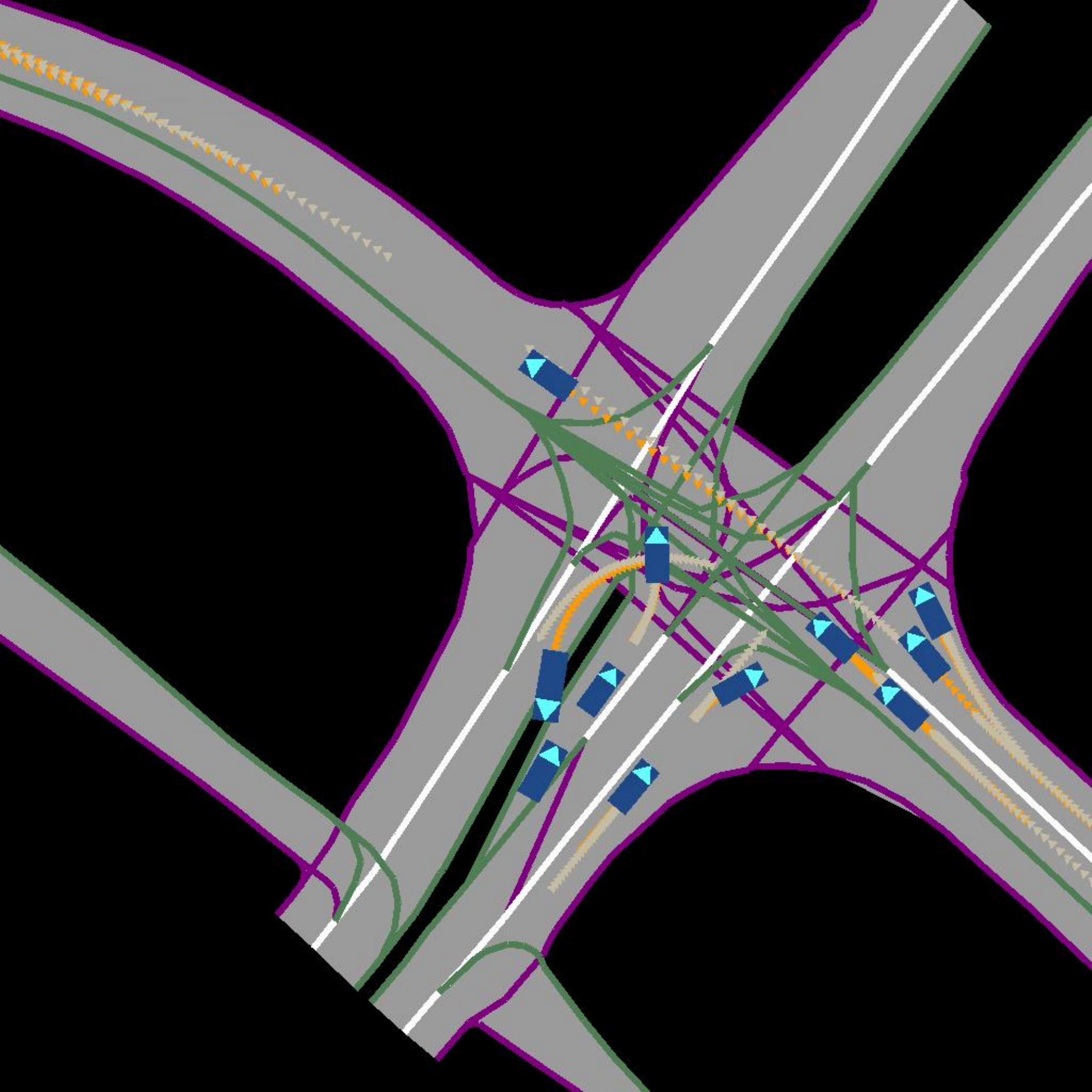}
        \caption{Training-free Guidance}
        \label{fig:banner-b}
    \end{subfigure}
    \hfill
    \begin{subfigure}[b]{0.32\textwidth}
        \centering
        \includegraphics[width=1\textwidth]{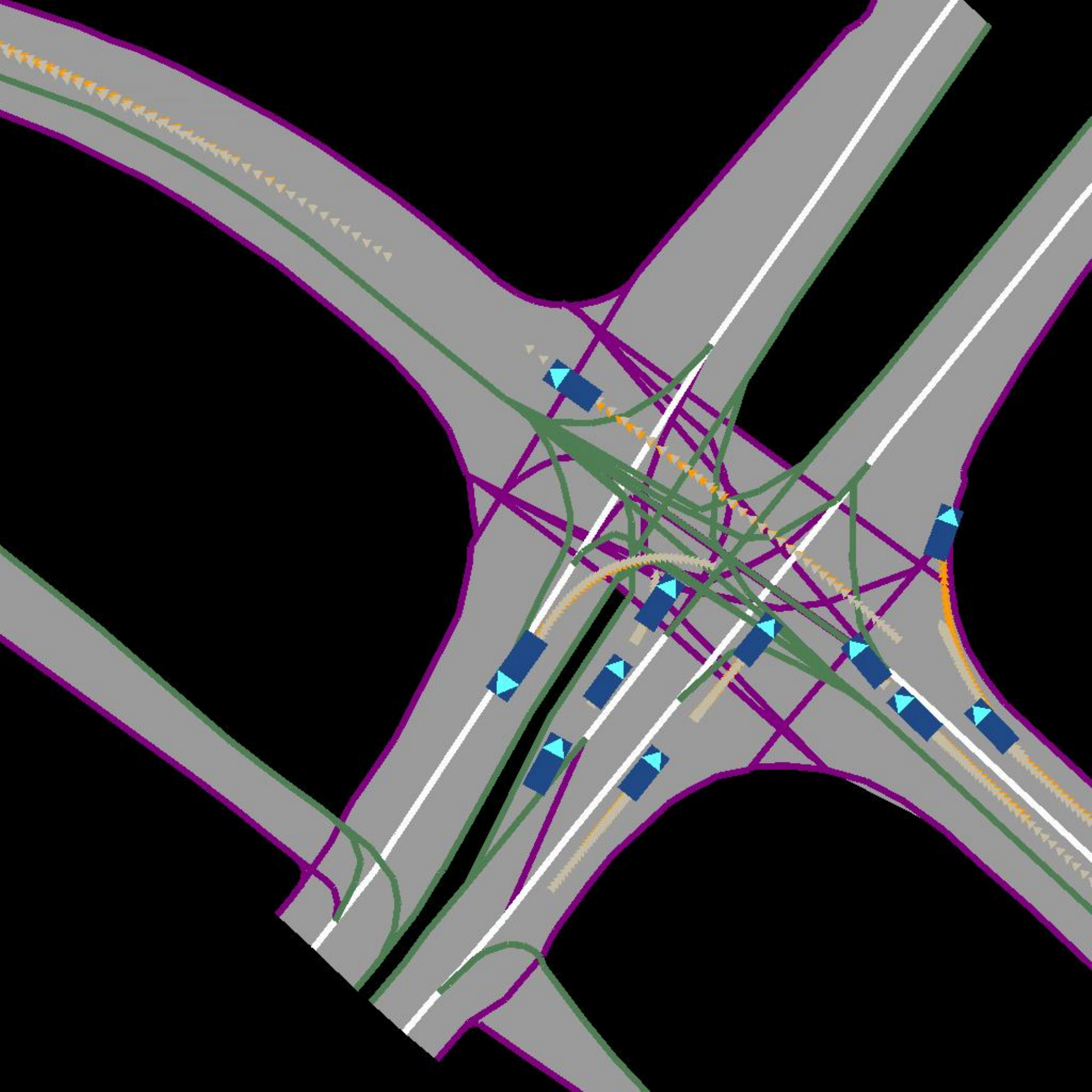}
        \caption{\method{} (Ours)}
        \label{fig:banner-c}
    \end{subfigure}
    \caption{Qualitative comparison of constrained trajectory generation. We visualize samples from the same traffic intersection scene and timestep, where gray dot trajectories indicate ground-truth agent motion and orange trajectories denote model predictions. Panel~\subref{fig:banner-a} shows an unconstrained diffusion baseline, which exhibits both offroad and collision violations while executing a left turn. Panel~\subref{fig:banner-b} shows a training-free guided method, MPGD without projection~\citep{hemanifold} that reduces violations but introduces noticeable trajectory distortion, including persistent offroad behavior and an abrupt acceleration during a right-turn maneuver toward the northeast, leading to a near-collision. Panel~\subref{fig:banner-c} shows our \method{}, which exhibits no offroad or collision violations while preserving realistic and coherent motion. All samples are generated from the same initial conditions. }
    \label{fig:banner}  
\end{figure*}

\section{Background}
\paragraph{Problem Setup}
\label{sec: definition}
A constrained generation problem is defined by an \textit{ambient sample space}, a \textit{constraint set}, and a \textit{target distribution supported on constraint set}. Let $\gX$ denote the ambient sample space, and $\Omega \subseteq \gX$ denote the constraint set.
A sample $\rvx \in \gX$ is ``valid'' only if $\rvx \in \Omega$, meaning it satisfies the problem constraints.
Let $p_0(\rvx)$ denote an unknown data generating distribution supported on $\gX$, from which we observe a collection of i.i.d. samples as a training dataset. In particular, $p_0$ satisfies$\int_{\rvx \in \Omega} p_0(\rvx)\,d\rvx = 1$. Our goal is to learn a generative model $p_\theta$ defined over the ambient sample space $\gX$, such that its induced distribution concentrates on $\Omega$ and matches the data distribution within the constraint set. We assume that $\Omega$ is implicitly specified through a non-negative differentiable loss function $\ell^\Omega: \gX \rightarrow \R^{\geq 0}$ such that $\ell^\Omega(\rvx) = 0$ if and only if $\rvx \in \Omega$.


\subsection{Diffusion Models}
Diffusion models (DMs)~\citep{sohl2015deep,ho2020denoising,songscore} are a class of generative models that learn to reverse a stochastic process to generate samples from a complex data distribution. They gradually transform data to pure noise through a \emph{forward process} defined by a Stochastic Differential Equation (SDE)
\begin{equation}
    \sP: d\rvx_t = f(\rvx_t; t) dt + \sigma(t) d\rvw, \quad \rvx_0 \sim p_0,
    \label{eq:diffusion:forward}
\end{equation}
where $\rvw$ is a standard Wiener process, and $f$ and $\sigma$ are predefined drift and diffusion functions. The process is constructed so that, for a sufficiently large $T$, the marginal distribution $p_T$ approaches a simple prior $\pi$, typically a standard Gaussian. The time-reversed process of \cref{eq:diffusion:forward} follows another SDE known as the \textit{reverse-time process}~\citep{anderson1982reverse},
\begin{equation}
    d\rvx_t = [f(\rvx_t; t) - \sigma^2(t) \nabla \log p_t(\rvx_t)]\,dt + \sigma(t)\,d\bar{\rvw},
    \label{eq:diffusion:reverse}
\end{equation}
where $\rvx_T \sim \pi$ and $\bar{\rvw}$ is a reverse-time Wiener process. Since the score function $\nabla_{\rvx_t} \log p_t(\rvx_t)$ is generally intractable, diffusion models approximate it using a neural network $s_\theta(\rvx_t, t)$, trained via minimizing the denoising score matching (DSM) objective~\citep{vincent2011connection}
\begin{equation}
\label{eq: score-learning}
    \gL(\theta) = \meanp{t, \rvx_0, \rvx_t}{\lambda(t)\norm{s_\theta(\rvx_t, t) - \nabla_{\rvx_t} \log p_t(\rvx_t \mid \rvx_0)}^2},
\end{equation}
which is an expectation over $t \sim U(t_\mathrm{min}, T)$ for some positive $t_\mathrm{min} \approx 0$, $\rvx_0$ is drawn from the data distribution, and $\rvx_t \sim p(\rvx_t \mid \rvx_0)$.
Sampling is performed by numerically simulating the learned reverse-time SDE starting from $\rvx_T\sim \pi(\cdot)$:
\begin{equation}
    d\rvx_t = [f(\rvx_t; t) - \sigma^2(t) s_\theta(\rvx_t; t)]\,dt + \sigma(t)\,d\bar{\rvw}.
    \label{eq:diffusion:reverse-approx}
\end{equation}

Moreover, diffusion models can be viewed as learning time-dependent dynamics that map a simple prior $\pi$ to the data distribution.
For Gaussian perturbations, the marginal score satisfies
\begin{align}
    \label{eq: diffusion-param}
    \nabla_{\rvx_t} \log p_t(\rvx_t) = \frac{\meanp{}{\rvx_0 \mid \rvx_t} - \rvx_t}{\sigma^2(t)},
\end{align}
where the posterior mean is defined as $\meanp{}{\rvx_0\mid \rvx_t} = \int \rvx_0 p(\rvx_0\mid \rvx_t)\,d\rvx_0$. A derivation is provided in Appendix~\cref{subsec: posterior-mean}. This identity motivates a parameterization: rather than directly learning the score, we train a neural network parameterized by $\theta$ to predict $\meanp{}{\rvx_0\mid \rvx_t}$, equivalently, a denoised estimate of the clean sample $D_\theta(\rvx_t; t)$. The score is then obtained by
\begin{align}
    s_\theta(\rvx_t; t) := \frac{D_\theta(\rvx_t; t) - \rvx_t}{\sigma^2(t)}.
\end{align}
This conditional posterior mean parameterization will later allow us to draw connections to deterministic generative models. A detailed discussion of flow matching is deferred to Appendix~\cref{sec: flow-matching}.

\subsection{Manually Bridged Diffusion Models}
\citet{naderiparizi2025constrained} present a framework called Manually Bridged Model (MBM), a constraint-aware extension of score-based diffusion models~\citep{songscore}. Given a constraint loss function $\ell^\Omega(\cdot)$ from~\cref{sec: definition}, a manual bridge term $\bridge_{\mathrm{MBM}}^\Omega(\rvx_t; t)$ is defined as $\gamma(t)\nabla_{\rvx_t} \ell^\Omega(\rvx_t)$, where
$\gamma(t): (0, T]\to \R^+$ is a  $C^1$ function, 
\begin{align}
    \label{eq: gamma-property}
    \lim_{t\to T}\gamma(t) = 0, \quad \lim_{t\to 0} \gamma(t) \to \infty,
\end{align} which vanishes at high noise levels and diverges at $t\to 0$, so that constraints are effectively enforced near the low-noise data manifold. The score model is modified as
\begin{align}
    s_\theta^\Omega(\rvx_t; t, \ell^\Omega, \gamma) := s_\theta(\rvx_t; t) - \bridge_{\mathrm{MBM}}^\Omega(\rvx_t; t),
\end{align}
and is trained via the standard DSM objective~\cref{eq: score-learning}.
Under the EDM~\citep{karras2022elucidating} variance-exploding parameterization, the resulting reverse SDE is:
\begin{align}
\label{eq: MBM SDE}
    d\rvx_t &= -2\sigma(t)\,s_\theta^\Omega(\rvx_t; t, \ell^\Omega, \gamma)\,dt + \sqrt{2\sigma(t)}\,d\bar\rvw,
\end{align}
which incorporates constraints both implicitly via score-model fine-tuning and explicitly
via an additive gradient field correction to  the model output. The corresponding marginal density satisfies:
\begin{align}
    p_{\mathrm{MBM}}^\Omega(\rvx_t; t)\propto p(\rvx_t; t)\exp(-\gamma(t) \ell^\Omega(\rvx_t; t))
\end{align}
such that 
\begin{align}
    p_{\mathrm{MBM}}^\Omega(\rvx_T; T) &= p(\rvx_T; T) \nonumber \\
    \lim_{t\to 0}p_{\mathrm{MBM}}^\Omega(\rvx_t; t)&= \lim_{t\to 0} p(\rvx_t; t)\1_\Omega(\rvx_t).
\end{align} 
This implies at the terminal state, the distribution concentrates on the feasible set, converging to the data distribution restricted to $\Omega$.
This construction extends naturally to multiple independent constraints by summing their bridge terms.

\section{Methodology}
We present~\method{}, a guidance-embedded fine-tuning framework for constrained generative modeling. Building on MBM, which evaluates constraint gradients directly at noisy states, \method{} instead evaluates constraints on the one-step denoised estimate, effectively shifting guidance from noisy space to data space and yielding gradients better aligned with semantic constraint violations. This modification yields more stable and informative guidance, especially at high noise levels. \method{} integrates denoised-state guidance directly into the learned score/vector field during training and applies uniformly to both diffusion and flow matching models.
\subsection{Denoised State Constraint Guidance During Training}
We consider a noisy state $\rvx_t$ along the generative trajectory and the choice of where to evaluate a constraint loss $\ell^\Omega(\cdot)$ in order to compute guidance gradients. Given a noisy intermediate state $\rvx_t$, the loss can be evaluated either directly at the noisy state $\rvx_t$ or at its corresponding one-step denoised estimate $D_\theta(\rvx_t; t)$ predicted by the pretrained base model parameterized by $\theta$. Prior methods such as MBM evaluate constraint gradients at $\rvx_t$, which can lead to high-variance gradient estimates at high noise levels, where $\rvx_t$ lies far from the noiseless data manifold. Consequently, gradients are computed far outside the feasible region of the constraints, where they are highly sensitive to noise and provide unreliable guidance. A closely related approach is adjoint matching~\citep{domingo2024adjoint}, which avoids noisy-state loss evaluation by formulating guidance through the minimization of a cost functional under a stochastic optimal control~\citep{fleming2012deterministic, sethi2018optimal}. In this framework, reverse-time trajectories are explicitly simulated to the terminal state, where constraint costs are evaluated, and guidance gradients are obtained by integrating the corresponding adjoint dynamics backward through time. While yielding accurate guidance signals, it requires full trajectory rollout and adjoint integration during training, incurring substantially higher computation and memory costs. We discuss its relationship to our method in the Appendix~\cref{sec: connection-to-soc}. Instead in \method{}, constraint guidance is computed by evaluating the constraint loss on the denoised estimate, which is closer to the underlying data distribution. Concretely, we define the bridge term (abusing notation) $\bridge_t^\Omega(\rvx_t; t)$ as:
\begin{align}
    \bridge_t^\Omega(\rvx_t; t) = \gamma(t)\nabla_{\rvx} \ell^\Omega(\rvx)\Big|_{\rvx = \mathrm{sg}(D_\theta(\rvx_t; t))},
\end{align}
where the gradient is taken with respect to the denoised state and $\gamma(t)$ controls the strength of constraint enforcement over time, which follows~\cref{eq: gamma-property} . The stop-gradient operator $\mathrm{sg}(\cdot)$ indicates that the constraint gradient is evaluated on the denoised estimate but is not backpropagated through the denoiser itself, avoiding differentiation through the pretrained network and substantially reducing computational and memory overhead.


\subsection{Bridge Conditioning for Fine-tuning}
\label{sec: bridge-finetune}
We now describe how the guidance signal $\bridge^\Omega(\rvx_t; t)$ is incorporated into a pretrained model. We focus on diffusion models in the EDM framework~\citep{karras2022elucidating}, where the pretrained model is implemented as the denoiser $D_\theta(\cdot; t)$, which takes a noisy state $\rvx_t$ as input and predicts a corresponding clean data estimate. At each training step, we first compute the denoised estimate $D_\theta(\rvx_t; t)$ using a forward pass of the frozen pretrained model without any constraint conditioning, and apply the stop-gradient operator, yielding $\mathrm{sg}(D_\theta(\rvx_t; t))$, when evaluating the constraint gradient. In contrast to MBM, which fine-tunes the full generative model, \method{} introduces a lightweight MLP-based bridge embedding module parameterized by $\phi$. The entire pretrained architecture is reused and kept frozen during fine-tuning, with the bridge embedding as the sole additional trainable component. This design preserves generative coverage and improves optimization stability.

 The constraint information is injected at both the input and output of the fine-tuning model, which augments a frozen pretrained backbone with a lightweight trainable embedding. In the pretrained model, the model input is the learned embedding $E_\theta(\rvx_t; t)$ of the noisy state $\rvx_t$. During fine-tuning, all backbone parameters $\theta$, including $E_\theta$, are frozen. To inject constraint information without modifying the backbone, we introduce a separate learnable guidance embedding $E_\phi$, parameterized by $\phi$. This embedding encodes the constraint gradient and is added to the original model input. The resulting input to the model is
\begin{equation}
\label{eq: input-inj}
    \phi(\rvx_t; t) = E_\theta(\rvx_t; t) + E_\phi\left(\nabla_\rvx \ell^\Omega(\rvx) \Big |_{\rvx = \mathrm{sg}(D_\theta(\rvx_t; t))}\right).
\end{equation}
 The input-side embedding allows the frozen backbone of the fine-tuning model to adapt its internal representations to the constraint signal. On the output side, a residual correction derived from the same bridge signal is added to the model output, 
\begin{align}
\label{eq: output-inj}
    D^{\Omega}_{\theta, \phi}(\rvx_t; t) = D_\theta(\phi(\rvx_t; t); t) - \bridge^\Omega(\rvx_t; t).
\end{align}
The output side correction directly counteracts constraint violations in the predicted clean state.

During fine-tuning, gradients are propagated only to $\phi$ with the constraint-augmented objective function given the parameterization~\cref{eq: diffusion-param}:
\begin{align}
    \label{obj: finetune-obj}
    \gL^{\mathrm{DM}}(\phi) &= \meanp{}{\lambda(t)\normscaled{ s_{\theta, \phi}^\Omega - \nabla \log p_t(\rvx_t \mid \rvx_0)}^2}, \quad\mathrm{s.t.}\ &s^{\Omega}_{\theta, \phi}(\rvx_t; t) = \frac{D_{\theta, \phi}^\Omega(\rvx_t; t) - \rvx_t}{\sigma^2(t)},
\end{align}
since the constraint-conditioned model output $D_{\theta, \phi}^\Omega(\rvx_t; t)$ approximates a clean data sample, the constraint-conditioned model is trained using the standard DSM objective. This design preserves the original training formulation by freezing the pretrained backbone and fine-tuning only a lightweight bridge embedding module. Constraint information, evaluated on the denoised estimate, is incorporated through a lightweight bridge embedding at the input (\cref{eq: input-inj}), and a residual correction at the output (\cref{eq: output-inj}), enabling effective constraint enforcement without altering the pretrained generative model and resulting in a stable and parameter-efficient fine-tuning procedure.

\paragraph{Relation to Sampling-time Denoised Guidance} 
Prior methods such as MPGD~\citep{hemanifold} evaluate constraint losses on the denoised estimate $D_\theta(\rvx_t; t)$ only at sampling time and use the resulting gradient scaled by some weighting function $r(t)$ to modify the reverse-time dynamics of a fixed pretrained model:
\begin{align}
    s_{\mathrm{MPGD}}(\rvx_t; t) = \frac{D_\theta(\rvx_t; t) - r(t) \nabla_\rvx\ell^\Omega(\rvx)\Big |_{\rvx=\mathrm{sg}(D_\theta(\rvx_t; t))} - \rvx_t}{\sigma^2(t)}.
\end{align}
Because this modification is applied post hoc, MPGD additionally performs an explicit projection step that uses a pretrained decoder to map the guided sample back onto the data manifold, preserving sample quality. In contrast, \method{} incorporates denoised-state constraint signals during training by optimizing the DSM objective with a constraint-conditioned score $s_{\theta, \phi}^\Omega(\rvx_t; t)$. 

\subsection{Theoretical Analysis for \method{}}
\begin{theorem}(\textbf{Asymptotic validity of constraint gradient substitution}) 
    \label{theorem: theorem}
Assume:
    \begin{itemize}
        \item Forward kernel concentrates: for the diffusion process, $\rvx_t\xrightarrow{p} \rvx_0$ as $t\downarrow 0$ and $p(\rvx_t\mid \rvx_0) \to \delta(\rvx_t - \rvx_0)$.
        \item Denoising consistency: the denoised estimate $D_\theta(\rvx_t; t)$ is consistent at small $t$: $D_\theta(\rvx_t; t) - \rvx_t \xrightarrow{p} 0$ as $t\downarrow 0$.
        \item Smoothness: loss function $\ell^\Omega(\cdot)$ has $L$-Lipschitz gradient: $\norm{\nabla \ell(\rvx) - \nabla \ell(\rvy)} \leq L\norm{\rvx - \rvy}$.
        \item Identity Jacobian consistency: the Jacobian of denoised estimate $D_\theta(\rvx_t; t)$ with respect to $\rvx_t$ approaches identity at small $t$: $\normscaled{\frac{\partial D_\theta(\rvx_t; t)}{\partial \rvx_t} - \mI}\xrightarrow{p} 0$  as $t\downarrow 0$.
    \end{itemize}
    Then
    \begin{align}
        \lim_{t\downarrow 0}\norm{\nabla_{\rvx_t} \ell^\Omega (\rvx_t) - \nabla_\rvx\ell^\Omega(\rvx)\Big|_{\rvx = \mathrm{sg}(D_\theta(\rvx_t; t))}}\xrightarrow{p} 0.
    \end{align}
\end{theorem}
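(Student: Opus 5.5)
The plan is to reduce the statement to the Lipschitz bound on $\nabla\ell^\Omega$ and then invoke the denoising-consistency assumption. Fix a small $t$ and write $\hat\rvx_t := \mathrm{sg}(D_\theta(\rvx_t; t))$. The stop-gradient only affects backpropagation, not the value of $\hat\rvx_t$. Hence the quantity in the theorem is simply $\nabla\ell^\Omega$ evaluated at two points, $\rvx_t$ and $\hat\rvx_t$. By the smoothness assumption,
\begin{align}
\normscaled{\nabla_{\rvx_t}\ell^\Omega(\rvx_t) - \nabla_\rvx \ell^\Omega(\rvx)\Big|_{\rvx=\hat\rvx_t}} \le L\,\norm{\rvx_t - D_\theta(\rvx_t; t)}
\end{align}
holds pointwise, i.e., for every realization of $\rvx_t$.

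Next I pass to convergence in probability. By the denoising-consistency assumption, $\norm{D_\theta(\rvx_t;t)-\rvx_t}\xrightarrow{p}0$ as $t\downarrow 0$. Fix $\epsilon>0$. The pointwise bound gives an inclusion of events:
\begin{align}
\sP\big(\norm{\nabla\ell^\Omega(\rvx_t)-\nabla\ell^\Omega(\hat\rvx_t)}>\epsilon\big) \le \sP\big(\norm{D_\theta(\rvx_t;t)-\rvx_t}>\epsilon/L\big)\to 0 .
\end{align}
This is exactly the claimed convergence in probability; the statement's ``$\lim$'' together with ``$\xrightarrow{p}$'' should be read this way. If $L=0$ the left side vanishes identically, so that case is trivial.

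For robustness I would add a remark on the remaining two assumptions. The forward-kernel concentration assumption justifies why denoising consistency is natural. Writing $D_\theta(\rvx_t;t)-\rvx_t = (D_\theta(\rvx_t;t)-\rvx_0) + (\rvx_0-\rvx_t)$, the second term tends to $0$ in probability because $\rvx_t\xrightarrow{p}\rvx_0$. The first term does too if the denoiser approximates the posterior mean, which collapses to $\rvx_0$ as $p(\rvx_t\mid\rvx_0)\to\delta$. The identity-Jacobian assumption is not needed for the value-level statement. It would matter only if one compared $\nabla_{\rvx_t}[\ell^\Omega(D_\theta(\rvx_t;t))] = (\partial D_\theta/\partial\rvx_t)^\top\nabla\ell^\Omega(D_\theta)$ against the stop-gradient version. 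In that case one adds and subtracts $\nabla\ell^\Omega(D_\theta)$ and bounds the extra term by $\norm{\partial D_\theta/\partial\rvx_t - \mI}\,\norm{\nabla\ell^\Omega(D_\theta)}$. This product goes to $0$ in probability provided $\norm{\nabla\ell^\Omega(D_\theta)}$ is tight; tightness follows from the Lipschitz gradient, $\norm{\nabla\ell^\Omega(\rvx)}\le\norm{\nabla\ell^\Omega(\rvx_0)}+L\norm{\rvx-\rvx_0}$.

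The main obstacle is not analytic but interpretive. One must fix precisely which gradient the stop-gradient expression denotes and make sure the comparison is at the level of values. Once that is settled, the core is the one-line Lipschitz estimate followed by the event-inclusion argument. The only technical care is the tightness step in the optional Jacobian variant, which I would handle by Slutsky's lemma.
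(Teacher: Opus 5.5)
Your proof is correct, and it is more direct than the paper's.

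The paper inserts an intermediate term $\nabla_{\rvx_t}\ell^\Omega(D_\theta(\rvx_t;t))$ and splits with the triangle inequality. A Lipschitz step bounds its distance to $\nabla\ell^\Omega(\rvx_t)$ by $L\norm{\rvx_t - D_\theta(\rvx_t;t)}$. A chain-rule step then bounds its distance to the stop-gradient term by $\norm{\partial D_\theta/\partial\rvx_t - \mI}\,\norm{\nabla\ell^\Omega(D_\theta(\rvx_t;t))}$. Beforehand, the paper also uses forward-kernel concentration to get $D_\theta(\rvx_t;t)-\rvx_0\xrightarrow{p}0$. You instead observe that $\mathrm{sg}$ leaves values unchanged, so the target is $\nabla\ell^\Omega$ evaluated at two points. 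One Lipschitz estimate plus denoising consistency then closes the argument, with your event inclusion making the convergence in probability explicit.

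Your route buys three things. First, fewer hypotheses: forward-kernel concentration and identity-Jacobian consistency are not needed for the statement as written. Second, an explicit pointwise bound $L\norm{\rvx_t - D_\theta(\rvx_t;t)}$ at every $t$. Third, immunity to an ambiguity in the paper's proof. There the middle term is treated as $\nabla\ell^\Omega$ evaluated at $D_\theta$ in the Lipschitz step, but as the total derivative $(\partial D_\theta/\partial\rvx_t)^\top\nabla\ell^\Omega(D_\theta)$ in the chain-rule step. Under the first reading the Lipschitz step is already the whole proof; under the second, that step does not follow from the Lipschitz bound alone.

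Read charitably, the paper's route buys one additional fact: the gradient backpropagated through the denoiser is also asymptotically equivalent, so stopping the gradient is harmless in the limit. Your optional variant recovers exactly this. It also supplies the boundedness in probability of $\norm{\nabla\ell^\Omega(D_\theta(\rvx_t;t))}$ that the paper's product bound uses without justification. That tightness step is where forward-kernel concentration is actually useful.
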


See Appendix~\cref{sec: proof} for proof details.

\section{Experiments}
We evaluate our method on both a dynamical system describing the motion of colliding balls in a box and real-world traffic scene generation. The bouncing ball experiment isolates physical and boundary constraints, capturing collision dynamics between balls as well as contacts with the enclosing walls. The traffic scene experiment extends evaluation to safety-critical scenarios with complex geometric and interaction constraints. Together, these experiments allow us to assess whether constraint-aware fine-tuning can reduce violations while preserving the learned generative distribution across both synthetic and realistic settings.

\subsection{Bouncing Balls}
We demonstrate the effectiveness of \method{} on a time-series modeling task involving the prediction of trajectories for a collection of balls interacting in a closed box. In this task, the balls move in straight paths at constant velocities until they collide with each other or the boundaries of the scene. Upon collision, the balls bounce elastically and deterministically, following the laws of momentum conservation.

The physics-based simulator used in this experiment is adapted from~\citet{gan2015deep}. We generated a training set consisting of 100,000 scenarios, each spanning 100 timesteps. At the start of each scenario, 10 balls are randomly positioned with random initial velocities, and their motion is simulated based on the physics model. The goal of this task is to learn the underlying simulator using the generated training dataset, and the task imposes constraints to prevent ball overlaps (termed  ``collision infractions'') and boundary violations (termed ``boundary infractions'').

We compare \method{} with a broad range of constrained generation methods across both diffusion and flow-matching frameworks. For diffusion models, we include standard EDM framework~\citep{karras2022elucidating} as an unconstrained baseline, MPGD without projection~\citep{hemanifold} as a training-free guidance method that evaluates the constraint loss at the one-step denoised estimate during sampling, and MBM~\citep{naderiparizi2025constrained}, which fine-tunes the model by incorporating gradients of constraint losses evaluated at every noisy intermediate state. We further consider standard FM as a deterministic alternative and adjoint matching (AM)~\citep{domingo2024adjoint}, a fine-tuning method which shares a similar control-theoretic motivation to our approach; details are provided in Appendix~\cref{sec: connection-to-soc}. To mirror MPGD in the FM setting, we implement a training-free MPGD style guidance for FM that evaluates the constraint loss at the one-step denoised estimate as well. Finally, we apply our proposed method to both diffusion and FM models for performance comparisons. 

Since each scenario is initialized with randomly sampled ball positions and velocities, there is no sample-wise correspondence between generated and ground-truth trajectories, and we therefore focus on distributional metrics. In addition to reporting ELBO, we evaluate distribution shift by computing the directed Hausdorff distance (termed ``HDH'') from generated states to the training data support by treating each state as point in $\sR^2$ and measuring the maximum nearest-neighbor distance from generated states to the ground truth states. This metric quantifies how far unconditional samples deviate from the training distribution.

In~\cref{tab:bb-experiment}, we observe a clear trade-off between constraint satisfaction and distributional fidelity. Unconstrained baselines preserve likelihood but suffer from high collision and boundary violation rates. The training-free guided methods, e.g. MPGD w/o projection and TF-Guided FM, nearly eliminate violations but induce degradation in ELBO and Hausdorff distance, indicating a distributional shift and degraded sample quality. Prior fine-tuning methods, e.g. MBM and AM, update the entire base model, altering the pretrained weights; in particular, AM incurs additional computational overhead, requiring a solution of a controlled SDE during training time (see~\cref{sec: connection-to-soc}), which leads to a degraded performance compared to our method. While MBM substantially reduces violations, it still exhibits non-negligible collision and boundary rates. Our ~\method{} further suppresses these infractions while maintaining r-ELBO comparable to MBM and achieving a lower HDH overall.

 \begin{table*}[!t] 
  \centering
  \small
  \caption{Bouncing balls experiment results. Best results are in \textbf{bold}. Second best results are $\uline{\mathrm{underlined}}$.}
  \begin{tabular}{llllll}
    \toprule
    \centering
    Method     & Collision ($\%$)  $\downarrow$  & Boundary ($\%$) $\downarrow$ & r-ELBO($\times 10^{-2}$) $\uparrow$  & HDH ($\times 10^{-2}$) $\downarrow$ \\
    \midrule
    Standard diffusion  & $38.12 \pm 0.36$ & $6.29 \pm 0.06$ & $\mathbf{-21.7 \pm 0.1}$ & $28.9 \pm 1.23$  \\
    MPGD w/o projection & $\mathbf{0.00 \pm 0.00}$ & $\mathbf{0.00 \pm 0.00}$ & $-27.1 \pm 0.1$ & $\uline{27.7 \pm 0.5}$ \\
    MBM & $2.86 \pm 0.04$ & $1.91 \pm 0.03$ & $\uline{-22.2 \pm 0.1}$ & $28.4 \pm 0.6$ \\
    DM-\method{}\ (Ours) & $\uline{0.01 \pm 0.00}$ & $\uline{0.03\pm 0.00}$ & $-22.8 \pm 0.1$ & $\mathbf{27.5 \pm 1.3}$\\
    \midrule
    Standard FM & $29.17 \pm 0.31$ & $3.80 \pm 0.08$ & $\mathbf{-37.3 \pm 0.1}$ & $\mathbf{28.5 \pm 2.0}$ \\
    TF-Guided FM & $\mathbf{1.03 \pm 0.03}$ & $\mathbf{0.19 \pm 0.02}$ & $-52.9 \pm 0.3$ & $30.4 \pm 1.4$  \\
    AM & $4.67 \pm 0.06$ & $0.49 \pm 0.02$ & $-39.4 \pm 0.2$ & $30.4 \pm 1.5$ \\
    FM-\method{}\ (Ours) & $\uline{2.35 \pm 0.08}$ & $\uline{0.44 \pm 0.03}$ & $\uline{-37.7 \pm 0.1}$ & $\uline{28.7 \pm 2.3}$ \\
    \bottomrule
  \end{tabular}
  \label{tab:bb-experiment}
\end{table*}

\begin{figure*}[!ht]
    \centering
    \begin{subfigure}[b]{0.32\textwidth}
        \centering
       \includegraphics[width=1\textwidth]{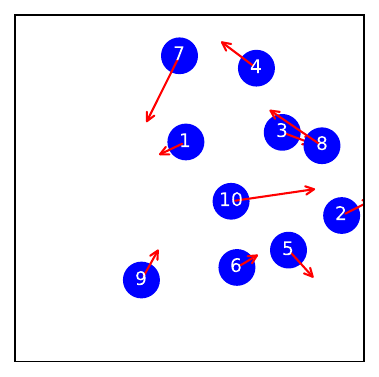}
       \caption{}
       \label{fig:bb-vis:1}
    \end{subfigure}
    \hfill
    \begin{subfigure}[b]{0.32\textwidth}
        \centering
        \includegraphics[width=1\textwidth]{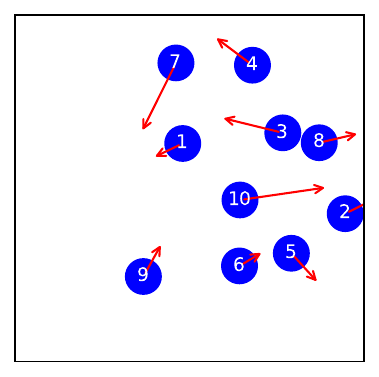}
        \caption{}
        \label{fig:bb-vis:2}
    \end{subfigure}
    \hfill
    \begin{subfigure}[b]{0.32\textwidth}
        \centering
        \includegraphics[width=1\textwidth]{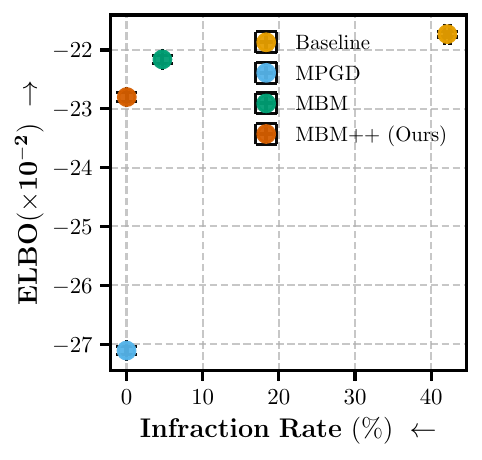}
        \caption{}
        \label{fig:bb_scatter-plot}
    \end{subfigure}
    \caption{Visualization of the bouncing balls task. Panel \subref{fig:bb-vis:1} and panel \subref{fig:bb-vis:2} show two consecutive steps before and after a bounce between the balls 3 and 8. In each step, each ball is represented by a 4-dimensional vector representing its location and velocity, shown by the red arrows. Panel \subref{fig:bb_scatter-plot} presents a Pareto comparison for diffusion-based models in terms of ELBO and infraction rate. Higher ELBO and lower infraction rate indicate better performance, with preferable methods located toward the upper-left region. Our method lies near the Pareto frontier, achieving a favorable trade off between these two metrics.}
\end{figure*}

\subsection{Traffic Scene Trajectory Simulations}

We next evaluate our method on a real-world vehicle trajectory prediction task, where the goal is to predict plausible future vehicle motions from a short observation window. Models trained to match data distributions~\citep{niedoba2024diffusion, tan2025flow, capellera2025unified} produce trajectories that violate safety or feasibility constraints, particularly in complex multi-agent driving scenarios~\citep{itra2021, ngiam2021scene, aydemir2023adapt}. In this setting, the goal is to reduce such infractions while preserving the distributional properties of vehicle models observed in the dataset. We follow the experimental setup of~\citet{lioutascritic} and conduct experiments on the INTERACTION dataset~\citep{zhan2019interaction}, which consists of vehicle trajectories collected across 11 distinct traffic scenarios. Given one second of observed vehicle states, the task is to predict the future vehicle states over a three-second horizon. This setting reflects realistic driving conditions with complex multi-agent interactions and serves as a challenging benchmark for constrained generative modeling.

We measure performance based on both realism and constraint satisfaction of the predicted trajectories. For each episode, we compute the average displacement error (ADE) and final displacement error (FDE) with respect to the ground truth trajectory. We generate six trajectory samples per episode, compute the metrics for each sample, and report the minimum ADE ($\min\mathrm{ADE}_6$) and FDE ($\min\mathrm{FDE}_6$) across samples. Additionally, we report maximum final distance (MFD) as a diversity measurement of predictions. Constraint satisfaction is assessed by reporting the collision rate and offroad rate. The collision rate measures the fraction of predicted trajectories that result in vehicle-to-vehicle overlap at every timestep, while the offroad rate measures the fraction of trajectories that violate drivable-area constraints.

We use DJINN~\citep{niedoba2024diffusion} as our baseline diffusion-based trajectory prediction model. DJINN predicts future trajectories $\rvx^{k:K}$ for all vehicles conditioned on past $k$ observed states $\rvx^{0:k}$, using a decoder-only transformer architecture~\citep{vaswani2017attention}. DJINN does not explicitly enforce constraints on offroad or collision infractions, as it is trained with a standard EDM-based diffusion objective to maximize $\log p(\rvx^{k:K} \mid \rvx^{0:k})$. In addition, we include Critic SMC~\citep{lioutascritic} as a baseline, which employs a learned critic to guide sampling via a sequential Monte Carlo procedure during generation. We train DJINN from scratch and apply MPGD (without projection) as a training-free baseline, and finally finetune DJINN with \method{}.

\begin{table*}[!t] 
  \centering
  \small
  \caption{Trajectory Prediction on INTERACTION~\citep{zhan2019interaction}: DR\_DEU\_Merging\_MT. Offroad rate and FDE are not reported for criticSMC, and we leave them blank. Our \method{} achieves significantly lower collision and offroad rate compared to prior baselines while having the lowest $\min\mathrm{ADE}_6$ and $\min\mathrm{FDE}_6$.}  
  \begin{tabular}{llllll}
    \toprule
    \centering
    Method     & Collision ($\%$)  $\downarrow$  & Offroad ($\%$) $\downarrow$ & $\min\text{ADE}_6$ $\downarrow$  & $\min\text{FDE}_6$ $\downarrow$ &  $\text{MFD}_6$ $\uparrow$ \\
    \midrule
    DJINN &  $0.39$ & $8.12$  & $0.194$ & $0.459$ & $2.483$  \\
    CriticSMC  &  $1.03$  & -- & $0.345$ & -- & $2.201$   \\
    MPGD w/o projection & $0.32$ & $0.29$ & $0.195$& $0.467$ & $2.372$ \\
    \midrule
    \method\ (Ours) & $0.27$  & $0.44$ & $0.180$ & $0.452$ & $2.352$ \\
    \bottomrule
  \end{tabular}
  \label{tab:trajectory-experiment}
\end{table*}

\begin{figure*}[!ht]
    \centering
    \begin{subfigure}[b]{0.32\textwidth}
        \centering
       \includegraphics[width=1\textwidth]{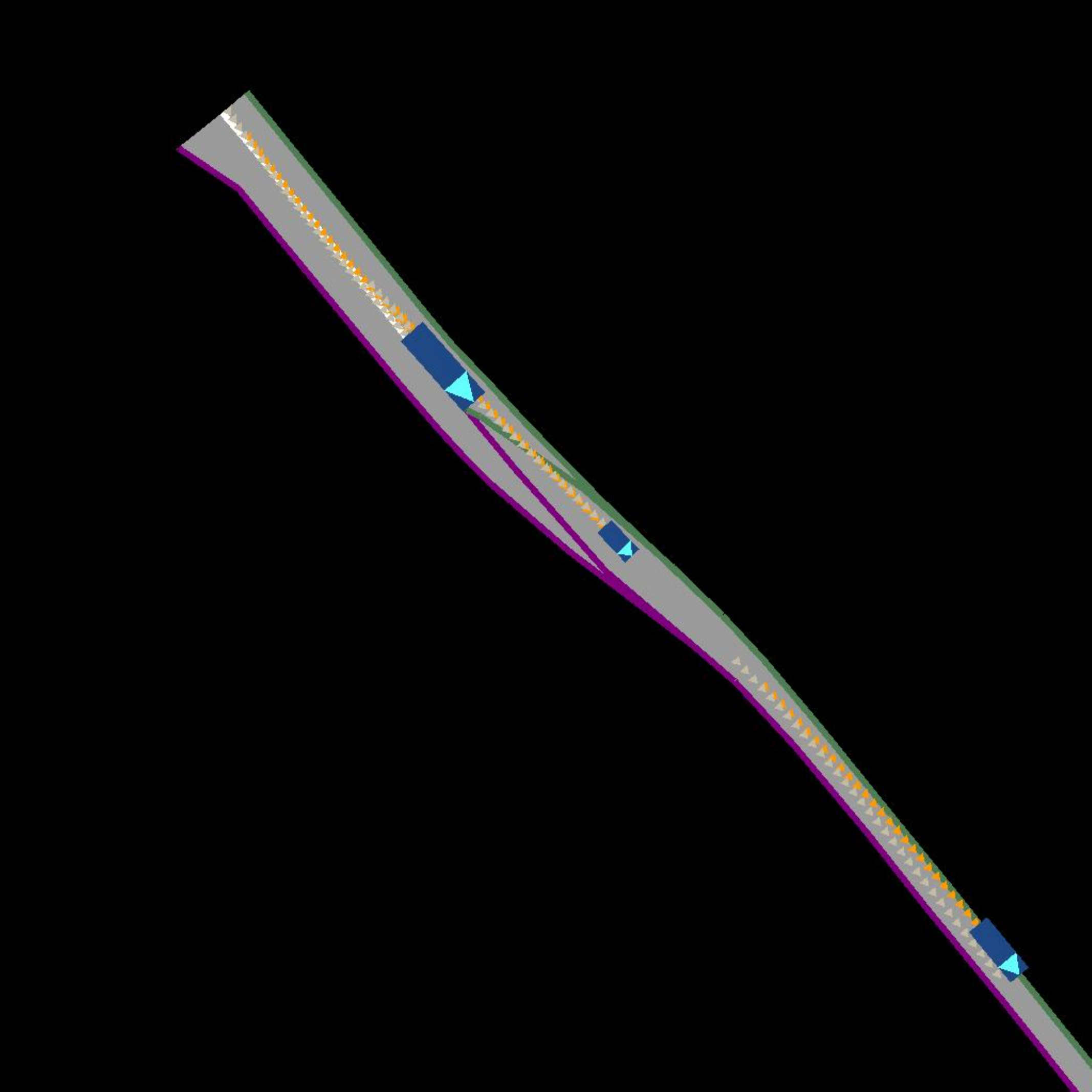}
       \caption{Standard Diffusion}
       \label{fig:baseline-single-lane}
    \end{subfigure}
    \hfill
    \begin{subfigure}[b]{0.32\textwidth}
        \centering
        \includegraphics[width=1\textwidth]{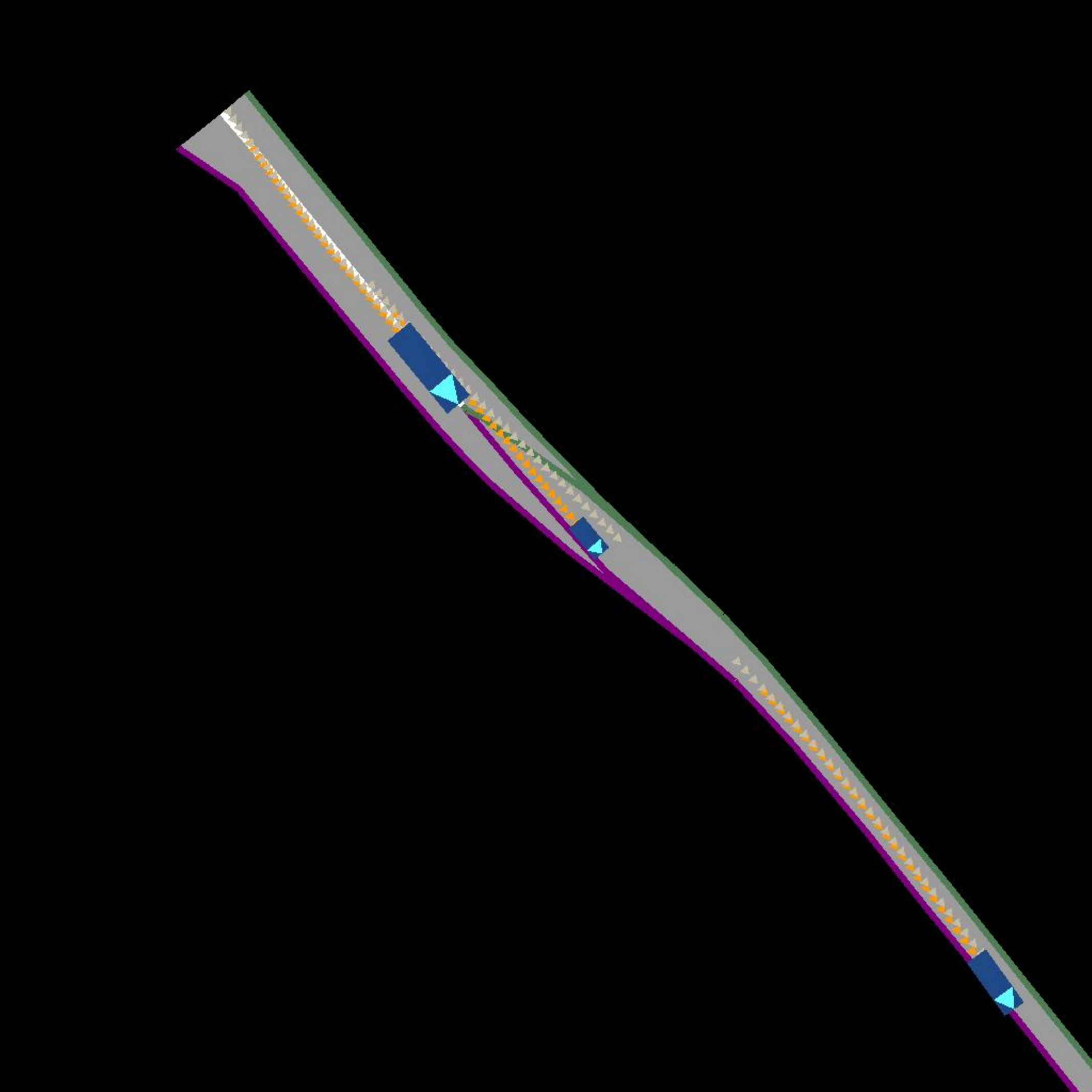}
        \caption{MPGD w/o projection}
        \label{fig:mpgd-single-lane}
    \end{subfigure}
    \hfill
    \begin{subfigure}[b]{0.32\textwidth}
        \centering
        \includegraphics[width=1\textwidth]{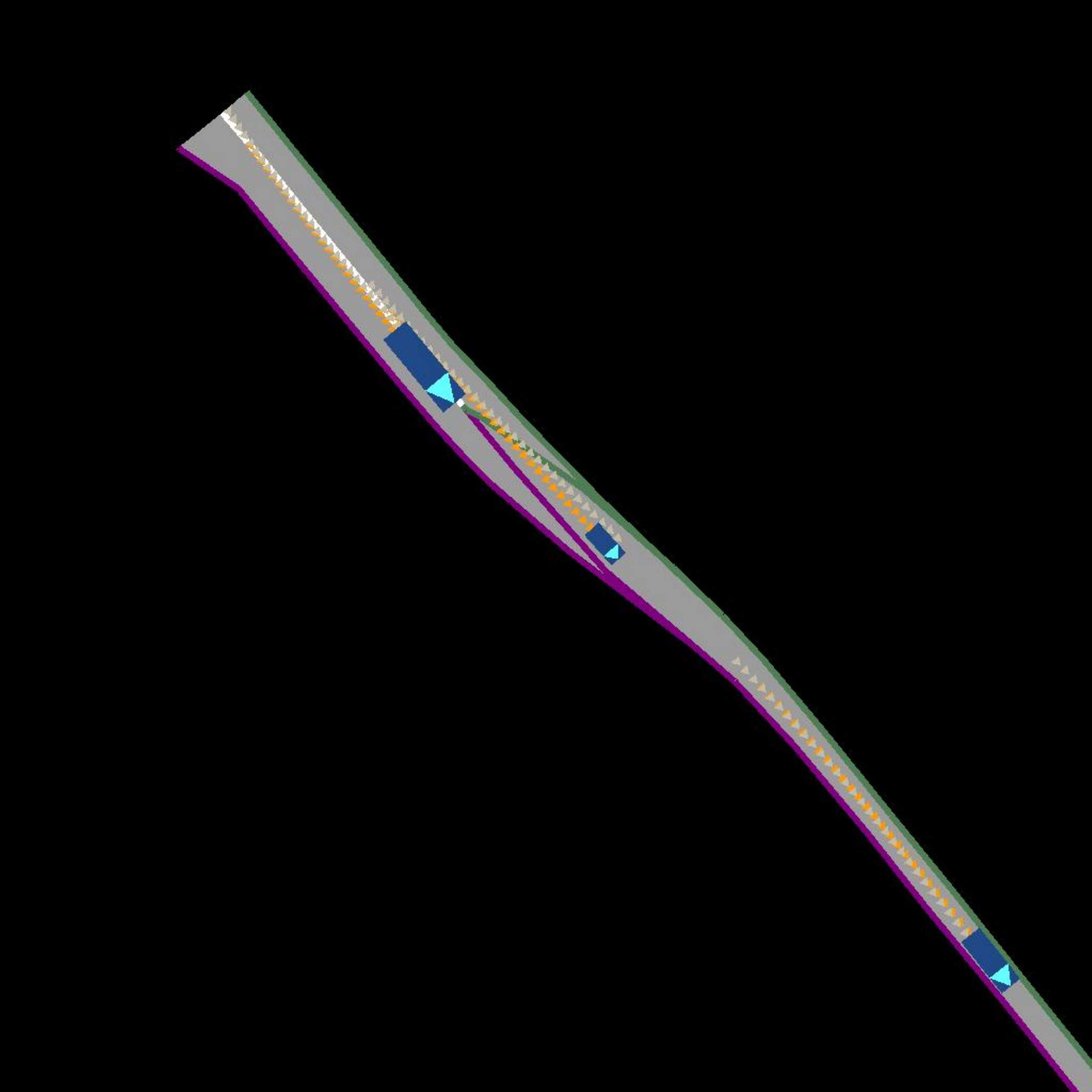}
        \caption{\method{} (Ours)}
        \label{fig:bridge-single-land}
    \end{subfigure}
    \caption{Comparisons on DR\_DEU\_Merging validation dataset (merging into single-lane scenario). In panel~\subref{fig:baseline-single-lane}, the unconstrained baseline produces an offroad trajectory for the vehicle near the bottom-right, deviating beyond the right road boundary. In panel~\subref{fig:mpgd-single-lane}, MPGD without projection applies an overly strong correction that pushes the same vehicle toward the left boundary, while the middle vehicle exhibits noticeable trajectory distortion. In contrast, panel~\subref{fig:bridge-single-land} maintains all predicted trajectories within the single drivable lane, preserving realistic motion without violations.
    }
\end{figure*}

\cref{tab:trajectory-experiment} reports results on the INTERACTION DR\_DEU\_Merging\_MT scenario. The unconstrained DJINN baseline achieves both strong average and final displacement accuracy but exhibits a high offroad rate, indicating frequent feasibility violations. Critic SMC reduces collision frequency through critic-guided particle selection, but achieves worse $\min \mathrm{ADE}_6$ (Critic SMC does not report offroad or final displacement metrics, precluding direct comparison in those metrics). Training-free guidance using MPGD substantially reduces collision rate and achieves lowest offroad rates, but it incurs an increase in displacement errors. Our method achieves the lowest collision rate while maintaining competitive offroad performance. Notably, it also attains the lowest $\min\mathrm{ADE}_6$. We speculate that incorporating offroad constraint signals during fine-tuning encourages predicted trajectories to remain closer to feasible regions, which improves alignment with ground-truth trajectories. Overall, these results suggest that constraint-aware fine-tuning can reduce infractions without sacrificing trajectory accuracy. 

\section{Related Work}
\paragraph{Generative Constraints Modeling} Several existing approaches aim to enforce constraints along the entire reverse sampling trajectory. ~\citet{christopher2024projected} adopts a stochastic gradient Langevin dynamics approach~\citep{welling2011bayesian}, projecting the noisy intermediate $\rvx_t$ onto the constraint boundary as a guidance step. ~\citet{lou2023reflected} introduces reflected diffusion models, which impose constraint conditions in the forward process, leading to substantial increase in training complexity. More recently, ~\citet{cheng2024gradient} proposes an extrapolation-correction-interpolation scheme, where unconstrained generated updates are followed by an explicit projection onto the constraint set during sampling. While effective for problems with known constraint operators, these methods are not practical in our setting: both projected and gradient-free approaches require explicit access to constraint boundaries, whereas in our setup we only have loss functions that evaluate constraint satisfaction; reflected diffusion is further limited to convex constraint sets. 

\paragraph{Guidance-based Constraint Enforcement}Beyond these methods, a broad class of data-driven conditional generation techniques incorporate constraints during sampling via classifier guidance~\citep{songscore, dhariwal2021diffusion}. Naively applying such guidance can induce distributional shift by perturbing the reverse dynamics, while prior work explicitly identifies this issue: ~\citet{naderiparizidon} shows that classifier-guided sampling can induce distributional distortion when the classifier is trained on a dataset composed of infracting and non-infracting samples whose proportion do not match the base model's infraction rate. In this case, guidance enforces feasibility but yields a distorted inference distribution, effectively pushing samples inward from the constraint boundaries. They address this by training a Bayes-optimal time-dependent classifier guidance for proportion matching. Subsequent work attributes guidance-induced quality degradation to samples drifting off the data manifold after each guided denoising step, which characterizes as a mismatch in the effective noise level. Building on this perspective, ~\citet{jung2024conditional} and~\citet{park2025temporal} both train a time predictor that estimates and corrects the noise-level mismatch, adjusting the guided samples back toward the data manifold. However, such guidance-based approaches generally require training additional classifiers or predictors beyond the base generative model.

\paragraph{Adjoint-based Sampling Methods}
Beyond Adjoint Matching for fine-tuning a pretrained base model, Adjoint Sampling~\citep{havens2025adjoint} and Adjoint Schrodinger Bridge Sampler (ASBS)~\citep{liu2025adjoint} propose strategies to accelerate adjoint-based training toward a target distribution defined up to an unnormalized energy function. Adjoint Sampling improves efficiency by simulating the controlled SDE only to obtain terminal samples, computes energy gradients at these final states, and generates intermediate states via the uncontrolled base process during inner-loop training. ASBS recasts the learning of a diffusion sampler as a Schrodinger bridge problem, thereby eliminating the memoryless schedule entirely from the existing works. It adopts an alternating optimization scheme that trains a drift model to steer trajectories toward the target distribution, with a corrector model compensates the mismatch caused by relaxing the memoryless assumption. However, both approaches assume a Brownian-motion reference distribution, which limits the ability to apply this to fine-tuning a complex or task-specific pretrained prior. 

\section{Discussion and Limitations}
This work shows that constraint-aware fine-tuning can substantially reduce infeasible samples while largely preserving the generative behavior learned by a pretrained model. By injecting constraint information through a lightweight bridge embedding rather than fully modifying the backbone parameter weights, our method strikes a favorable balance between constraint satisfaction and distributional fidelity. Empirically, this is reflected in consistently lower violation rates without degradation in likelihood or diversity observed in training-free guidance methods across both synthetic and real-world settings.

Our approach assumes that hard projection onto the constraint set is infeasible. In particular, constraints such as collision avoidance are difficult to enforce via projection without breaking the generative dynamics. As a result, while violations are significantly reduced, they may not be completely eliminated. In addition, constraint evaluation and gradient computation are required at intermediate states, which serve as surrogates for terminal states during training. Although this avoids explicit trajectory rollout and is more effective than directly evaluating constraints at noisy states, constraints that are expensive to compute may still introduce nontrivial computational overhead. Finally, the effectiveness of this denoised-state surrogate relies on assumptions, therefore, when these assumptions are weakened, constraint gradients may become less informative, potentially limiting enforcement near the boundary of the feasible region.

Since constraint enforcement in our framework is achieved through constraint gradient guidance, increasing the number of sampling steps or adopting adaptive, state-dependent guidance schedules may further reduce residual violations, potentially eliminating infractions. Future work may also explore adaptive step size near constraint boundaries, and learned surrogate constraints to improve gradient guidance quality. These directions provide promising avenues to strengthen constraint satisfaction while preserving the generative fidelity of pretrained models.

\newpage

\bibliography{refs}
\bibliographystyle{iclr2026_conference}

\newpage
\appendix
\section{Background on Flow Matching and Parameterization}
\label{sec: flow-matching}
Flow matching (FM) provides a deterministic alternative to diffusion-based generation that transports samples from a simple distribution to the data distribution. Let $\rvx_0\sim \pi$ denote a noise sample and $\rvx_1\sim p_1$ denote a data sample.  Instead of learning a score function, flow matching learns a time-dependent velocity field $v_\theta(\rvx_t, t)$ and generates samples by solving the ordinary differential equation (ODE):
\begin{align}
    d\rvx_t = v(\rvx_t, t) \,dt, \quad \rvx_0\sim \pi,
\end{align}
so that integrating the ODE from $t=0$ to $t=1$ transports noise into data. The velocity field is trained by regressing onto a prescribed target conditional flow~\citep{lipman2022flow} using
\begin{align}
    \gL(\theta) = \meanp{}{\normscaled{{v_\theta(\rvx_t, t) - \frac{d\rvx_t}{dt}}}^2}= \meanp{t, \rvx_0, \rvx_1}{\norm{v_\theta(\rvx_t, t) - (\rvx_1 - \rvx_0)}^2},
\end{align}
where $\rvx_t = t \rvx_1 + (1-t) \rvx_0$, a linear interpolation between noise and data.

\subsection{Express the Marginal Score in terms of Posterior Mean $\meanp{\rvx_0 | \rvx_t}{\rvx_0}$}
\label{subsec: posterior-mean}
Show that $\nabla_{\rvx_t} \log p_t(\rvx_t) = \frac{\meanp{\rvx_0 | \rvx_t}{\rvx_0} - \rvx_t}{\sigma^2(t)}$.
\begin{proof}
\begin{align}
        \nabla_{\rvx_t} \log p_t(\rvx_t) &= \frac{\nabla_{\rvx_t} p_t(\rvx_t)}{p_t(\rvx_t)} 
        = \frac{\nabla_{\rvx_t}\int p_t(\rvx_t | \rvx_0) p(\rvx_0)\,d\rvx_0}{p_t(\rvx_t)} \nonumber\\
        &= \frac{\int\nabla_{\rvx_t}\frac{1}{\sqrt{2\pi}t} \exp\left(-\frac{(\rvx_t - \rvx_0)^2}{2\sigma^2(t)}\right)p(\rvx_0)\,d\rvx_0}{p_t(\rvx_t)} \nonumber\\
        &= \frac{\int p_t(\rvx_t | \rvx_0)\left(-\frac{\rvx_t - \rvx_0}{\sigma^2(t)}\right)p(\rvx_0)\,d\rvx_0}{p_t(\rvx_t)} \nonumber \\
        &= \frac{\int p_t(\rvx_t | \rvx_0) p(\rvx_0) \rvx_0\,d\rvx_0 - \int p_t(\rvx_t | \rvx_0) p(\rvx_0) \rvx_t\,d\rvx_0}{\sigma^2(t) p_t(\rvx_t)} \nonumber\\
        &= \frac{p_t(\rvx_t)\int p_t(\rvx_0 | \rvx_t)\rvx_0 \,d\rvx_0 - \rvx_t \int p_t(\rvx_t, \rvx_0)\,d\rvx_0}{\sigma^2(t) p_t(\rvx_t)}\nonumber \\
        &= \frac{\int p_t(\rvx_0 | \rvx_t) \rvx_0\,d\rvx_0 - \rvx_t}{\sigma^2(t)}\nonumber \\
        &= \frac{\meanp{\rvx_0 | \rvx_t}{\rvx_0} - \rvx_t}{\sigma^2(t)}\label{eq: posterior-mean}.
\end{align}
\end{proof}
\subsection{Parameterization of Flow Matching}
Similarly, instead of directly learning a marginal velocity field, flow matching defines the velocity as the conditional expectation $v_t(\rvx_t) = \meanp{}{\dot\rvx_t\mid \rvx_t}$. For the linear interpolation path $\rvx_t = t\rvx_1 + (1-t)\rvx_0$, the time derivative is $\dot\rvx_t = \rvx_1 - \rvx_0$. Expressing $\rvx_0 = \frac{\rvx_t - t\rvx_1}{1-t}$ and taking the conditional expectation yields
\begin{align}
    \label{eq: fm-param}
    v_t(\rvx_t) = \frac{\meanp{}{\rvx_1\mid \rvx_t} - \rvx_t}{1 - t},
\end{align}
such that the learned flow network learns a denoised endpoint approximating $\meanp{}{\rvx_1\mid \rvx_t}$ then derives the velocity field. This shared conditional mean perspective provides a unified abstraction for both diffusion and flow matching models.

\subsection{Bridge Conditioning on Flow Matching}
The same bridge conditioning mechanism extends naturally to flow-matching models. In particular, the flow-matching model can be viewed as learning a conditional estimate of the clean data state through its parameterized velocity, consistent with the conditional mean formulation introduced in~\cref{eq: fm-param}. As a result, the same bridge-conditioned denoised estimate $D_{\theta, \phi}^\Omega(\rvx_t; t)$ is shared across diffusion and flow-matching models with the objective function:
\begin{align}
    \gL^{\mathrm{FM}}(\phi) &= \meanp{}{\normscaled{v_{\theta, \phi}^\Omega(\rvx_t; t) - v^*(\rvx_t, t \mid \rvx_0, \rvx_1)}^2}, \nonumber \\
    \mathrm{where}\ &v_{\theta, \phi}^\Omega(\rvx_t; t) = \frac{D_{\theta, \phi}^\Omega(\rvx_t; t) - \rvx_t}{1-t}.
\end{align}

Using this formulation, the bridge-conditioning velocity is obtained directly from the denoised estimate, and the flow-matching objective is defined accordingly. As a result, the bridge-conditioning mechanism applies uniformly across both paradigms: the constraint guidance signals are evaluated on the denoised estimate, injected into the model through the lightweight bridge embedding, and the resulting model is trained under the corresponding objective. This shared denoised-state formulation enables a unified fine-tuning framework across diffusion and flow matching models.

\section{Proof for~\cref{theorem: theorem}}
\label{sec: proof}
\begin{proof}
    Given assumption forward kernel concentrates and denoising consistency, 
    \begin{align}
        \lim_{t\downarrow 0}\norm{D_\theta(\rvx_t; t) - \rvx_0}\leq \lim_{t\downarrow 0}\norm{D_\theta(\rvx_t; t) - \rvx_t} + \lim_{t\downarrow 0}\norm{\rvx_t - \rvx_0} \xrightarrow{p} 0. 
    \end{align}
    By smoothness,
    \begin{align}
    \label{eq: smoothness}
        \lim_{t\downarrow 0}\norm{\nabla_{\rvx_t} \ell^\Omega(\rvx_t) - \nabla_{\rvx_t} \ell^\Omega(D_\theta(\rvx_t; t))} \leq L\lim_{t\downarrow 0}\norm{\rvx_t - D_\theta(\rvx_t; t)}\xrightarrow{p} 0 \quad\mathrm{as}\ t\downarrow 0.
    \end{align}
    By the chain rule:
    \begin{align}
        \nabla_{\rvx_t} \ell^\Omega(D_\theta(\rvx_t; t)) = \nabla_{D_\theta(\rvx_t; t)}\ell^\Omega(D_\theta(\rvx_t; t)) \frac{\partial D_\theta(\rvx_t; t)}{\partial \rvx_t},
    \end{align}
    then
    \begin{align}
    \label{eq: jacobian-id}
        &\lim_{t\downarrow 0}\norm{\nabla_{\rvx_t} \ell^\Omega(D_\theta(\rvx_t; t)) - \nabla_{D_\theta(\rvx_t; t)} \ell^\Omega(D_\theta(\rvx_t; t))}\nonumber \\
        =&\ \lim_{t\downarrow 0}\normscaled{\left(\frac{\partial D_\theta(\rvx_t; t)}{\partial \rvx_t} - \mI\right) \nabla_{D_\theta(\rvx_t; t)} \ell^\Omega(D_\theta(\rvx_t; t))} \nonumber \\
        \leq &\ \lim_{t\downarrow 0} \left(\normscaled{\frac{\partial D_\theta(\rvx_t; t)}{\partial \rvx_t} - \mI}\normscaled{\nabla_{D_\theta(\rvx_t; t)} \ell^\Omega(D_\theta(\rvx_t; t))}\right) \nonumber\\
        \xrightarrow{p}&\ 0
    \end{align}
    Therefore,
    \begin{align}
        &\lim_{t\downarrow 0} \normscaled{\nabla_{\rvx_t} \ell^\Omega(\rvx_t) - \nabla_{D_\theta(\rvx_t; t)} \ell^\Omega(D_\theta(\rvx_t; t))} \nonumber \\
        \leq &\ \lim_{t\downarrow 0} \left(\normscaled{\nabla_{\rvx_t} \ell^\Omega(\rvx_t) - \nabla_{\rvx_t}\ell^\Omega(D_\theta(\rvx_t; t))} + \normscaled{\nabla_{\rvx_t} \ell^\Omega(D_\theta(\rvx_t; t)) - \nabla_{D_\theta(\rvx_t; t)} \ell^\Omega(D_\theta(\rvx_t; t))}\right) \nonumber\\
        \leq&\ \lim_{t\downarrow 0} 0
    \end{align}
    by combining~\cref{eq: smoothness} and ~\cref{eq: jacobian-id}.
\end{proof}

\subsection{Complementary Theorems}
\begin{theorem}
    (\textbf{Near-optimal DSM implies small path-space KL}) Let $\sP_{\theta, \phi}^\Omega$ and $\sP^*$ be the path measures induced by the following two reverse-time SDEs on $C([0, T]; \sR^d)$, sharing the same diffusion coefficient $\sigma(t)$:
    \begin{align}
        \sP_{\theta, \phi}^\Omega&: d\rvx_t = (f(\rvx_t; t) - \sigma^2(t) s^{\Omega}_{\theta, \phi}(\rvx_t; t))\,dt + \sigma(t)d\,\rvw \nonumber\\
        \sP^* &: d\rvx_t = \left(f(\rvx_t; t) - \sigma^2(t) s^*(\rvx_t; t)\right)\,dt + \sigma(t)\,d\rvw
    \end{align}
    Define $u(\rvx_t; t) = \sigma(t) (s_{\theta, \phi}^\Omega(\rvx_t; t) - s^*(\rvx_t; t))$. Assume Novikov's condition holds for $u$ under $\sP^*$, 
    \begin{align}
        \meanp{\sP^*}{\exp\left(\frac 12 \int_0^T \norm{u(\rvx_t; t)}^2\,dt\right)} < \infty.
    \end{align}
    Then the path-space KL divergence admits the identity:
    \begin{align}
        D_{\mathrm{KL}} (\sP_{\theta, \phi}^\Omega\parallel\sP^*) = \frac 12 \meanp{\sP_{\theta, \phi}^\Omega}{\int_0^T \sigma^2(t) \norm{s_{\theta, \phi}^\Omega(\rvx_t; t) - s^*(\rvx_t; t)}^2\,dt}.
    \end{align}
    Moreover, assuming the learned score satisfies the near-optimality condition for some small $\epsilon > 0$,
    \begin{align}
        \meanp{t\sim \pi(t), \rvx_t}{\norm{s_{\theta, \phi}^\Omega(\rvx_t; t) - s^*(\rvx_t; t)}^2}\leq \epsilon,
    \end{align}
    Then
    \begin{align}
        D_{\mathrm{KL}} (\sP_{\theta, \phi}^\Omega\parallel \sP^*) \leq \frac 12\sup_{t} \frac{1}{\pi(t)} \sigma(t)\epsilon.
    \end{align}
\end{theorem}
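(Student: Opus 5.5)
The plan is to apply Girsanov's theorem to the two reverse-time SDEs, which share the diffusion coefficient $\sigma(t)$. Once that gives the identity, the $\epsilon$-bound should follow from a change of measure in the time variable.

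\textbf{Step 1 (drift difference).} Subtracting the two drifts gives
\begin{align*}
\bigl(f - \sigma^2 s^\Omega_{\theta,\phi}\bigr) - \bigl(f - \sigma^2 s^*\bigr) = -\sigma(t)\,u(\rvx_t;t).
\end{align*}
Dividing by the diffusion coefficient, the Girsanov kernel is $-u(\rvx_t;t)$. Novikov's condition for $u$ under $\sP^*$ makes the exponential local martingale
\begin{align*}
Z_T=\exp\Bigl(-\int_0^T u\cdot d\rvw_t-\tfrac12\int_0^T\norm{u}^2dt\Bigr)
\end{align*}
a true martingale. Hence $\sP^\Omega_{\theta,\phi}\ll\sP^*$ with $d\sP^\Omega_{\theta,\phi}/d\sP^*=Z_T$, where $\rvw$ is the $\sP^*$-Brownian motion and both processes start from the same law $\pi$, so there is no initial-condition term.

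\textbf{Step 2 (KL identity).} Under $\sP^\Omega_{\theta,\phi}$ the process $\tilde\rvw_t=\rvw_t+\int_0^t u\,ds$ is a Brownian motion. Rewriting,
\begin{align*}
\log Z_T=-\int_0^T u\cdot d\tilde\rvw_t+\tfrac12\int_0^T\norm{u}^2dt.
\end{align*}
Taking the $\sP^\Omega_{\theta,\phi}$-expectation kills the stochastic integral, provided it is a true martingale. I would justify this by assuming $\meanp{\sP^\Omega_{\theta,\phi}}{\int_0^T\norm{u}^2dt}<\infty$; if this fails, both sides of the identity are $+\infty$, so the identity holds trivially. Substituting $\norm{u}^2=\sigma^2(t)\norm{s^\Omega_{\theta,\phi}-s^*}^2$ then yields the stated formula.

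\textbf{Step 3 (bound).} By Fubini, write
\begin{align*}
\int_0^T\sigma^2(t)\,\mathbb{E}\norm{s^\Omega_{\theta,\phi}-s^*}^2dt=\int_0^T\pi(t)\,\frac{\sigma^2(t)}{\pi(t)}\,\mathbb{E}\norm{s^\Omega_{\theta,\phi}-s^*}^2dt.
\end{align*}
Bounding the ratio $\sigma^2(t)/\pi(t)$ by its supremum gives
\begin{align*}
\le \sup_t\frac{\sigma^2(t)}{\pi(t)}\;\meanp{t\sim\pi,\rvx_t}{\norm{s^\Omega_{\theta,\phi}-s^*}^2}\le \sup_t\frac{\sigma^2(t)}{\pi(t)}\,\epsilon .
\end{align*}
The supremum has $\sigma^2$, while the displayed bound has $\sigma(t)$. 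I would flag this and state the bound with $\sigma^2$, or note that the two coincide under a rescaling convention for $\sigma$. I would also require $\pi(t)>0$ wherever $\sigma(t)\neq0$.

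\textbf{Main obstacle: the marginal mismatch.} The KL identity takes its expectation under the marginals of the \emph{learned} process $\sP^\Omega_{\theta,\phi}$. The DSM near-optimality condition, by contrast, is naturally an expectation over the forward (data) marginals $p_t$, which are the marginals of $\sP^*$. My first attempt would be to read the near-optimality assumption as holding with $\rvx_t$ drawn from the $\sP^\Omega_{\theta,\phi}$ marginals. If that reading is unsatisfactory, I would instead prove the bound for the reversed divergence $D_{\mathrm{KL}}(\sP^*\parallel\sP^\Omega_{\theta,\phi})$. That case runs symmetrically by Girsanov, with the expectation under $\sP^*$, and there the DSM condition applies directly.
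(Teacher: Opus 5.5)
The paper states this theorem without proof, so there is no argument of the paper's to compare yours against. The Novikov hypothesis makes clear that Girsanov is the intended route, and your Steps 1--2 are the standard derivation.

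Two rigor points in Step 2. First, identifying $Z_T\cdot\sP^*$ with $\sP^\Omega_{\theta,\phi}$ needs uniqueness in law for the learned SDE. Second, the case $\meanp{\sP^\Omega_{\theta,\phi}}{\int_0^T\norm{u}^2dt}=\infty$ is not trivial. Localize with $\tau_n=\inf\{t:\int_0^t\norm{u}^2ds\ge n\}$. Then $Z_{\tau_n}$ is the density on $\mathcal{F}_{\tau_n}$, and the stopped stochastic integral has mean zero under $\sP^\Omega_{\theta,\phi}$. Since relative entropy can only decrease under restriction to a sub-$\sigma$-algebra, $D_{\mathrm{KL}}(\sP^\Omega_{\theta,\phi}\parallel\sP^*)\ge\frac12\meanp{\sP^\Omega_{\theta,\phi}}{\int_0^{\tau_n}\norm{u}^2dt}$. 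The right-hand side increases to $\frac12\meanp{\sP^\Omega_{\theta,\phi}}{\int_0^T\norm{u}^2dt}$ by monotone convergence.

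The two discrepancies you flag are defects of the statement, not of your argument, and you should resolve them rather than hedge.

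\textbf{$\sigma$ versus $\sigma^2$.} Your bound $\frac12\sup_t\frac{\sigma^2(t)}{\pi(t)}\epsilon$ is sharp, and the displayed bound is false under the theorem's own SDEs (drift $\sigma^2 s$, diffusion $\sigma$). So drop the ``rescaling convention'' remark. Take $s^\Omega_{\theta,\phi}-s^*=c(t)v$ with $c$ deterministic and $v$ a fixed vector. Novikov is then automatic, and the error is the same under any choice of marginals. Concentrating $c$ near a maximizer $t_0$ of $\sigma^2/\pi$, the identity gives $D_{\mathrm{KL}}\approx\frac12\frac{\sigma^2(t_0)}{\pi(t_0)}\epsilon$. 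This exceeds the displayed bound whenever $\sup_t\sigma^2/\pi>\sup_t\sigma/\pi$, for example with uniform $\pi$ and the paper's $\sigma_{\max}=80$.

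\textbf{The marginal mismatch.} Your first reading proves the inequality only by making the hypothesis about the learned process's own marginals, which DSM training does not control. Under the natural reading ($\rvx_t\sim p_t$), the stated direction does not follow at all. Take $u$ bounded, so Novikov holds. The learned drift can then trap $\sP^\Omega_{\theta,\phi}$ in a thin region that $\sP^*$ reaches rarely and leaves quickly. This makes $\meanp{\sP^\Omega_{\theta,\phi}}{\int_0^T\norm{u}^2dt}$ of order one while the $p_t$-weighted error tends to zero.

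So the correct theorem is your fallback, and it is the direction the given hypothesis actually fits. Since $Z_T>0$, $\sP^*\ll\sP^\Omega_{\theta,\phi}$ with density $Z_T^{-1}$. Hence $D_{\mathrm{KL}}(\sP^*\parallel\sP^\Omega_{\theta,\phi})=\meanp{\sP^*}{\int_0^Tu\cdot d\rvw+\frac12\int_0^T\norm{u}^2dt}$. Novikov itself gives $\meanp{\sP^*}{\int_0^T\norm{u}^2dt}\le 2\,\meanp{\sP^*}{\exp(\frac12\int_0^T\norm{u}^2dt)}<\infty$. So the stochastic integral has mean zero with no added assumption, whereas in your Step 2 you had to add integrability under $\sP^\Omega_{\theta,\phi}$.

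Fubini and the DSM condition then give $D_{\mathrm{KL}}(\sP^*\parallel\sP^\Omega_{\theta,\phi})\le\frac12\sup_t\frac{\sigma^2(t)}{\pi(t)}\epsilon$. If $\sP^*$ is started from $p_T$, so that its marginals really are $p_t$, add the KL divergence between $p_T$ and the prior.
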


\begin{theorem}
    (\textbf{Asymptotic concentration of probability mass on} $\Omega$)
    Given $\lim_{t\to T} \gamma(t) = 0$, $\lim_{t\to 0} \gamma(t) \to \infty$, and $C^1$ non-negative loss function $\ell^\Omega(\rvx)$, then
    \begin{align}
        \exp(-\gamma(t) \ell^\Omega(D_\theta(\rvx_t; t)))\xrightarrow{p} \1_\Omega(\rvx_t), \quad \mathrm{as}\ t\to 0
    \end{align}
    so $p^\Omega(\rvx_t; t)\propto p(\rvx_t; t)\exp(-\gamma(t) \ell^\Omega(D_\theta(\rvx_t; t)))$ converges to $p(\rvx_t; t)\1_{\Omega}(\rvx_t)$ at terminal state, which places mass on a truncation of $p(\rvx_t)$ to $\Omega$. This follows the argument of MBM~\citep{naderiparizi2025constrained}.
\end{theorem}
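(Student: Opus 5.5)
The plan is to reduce the claim to a pointwise statement about the deterministic map $g_t(\rvy) := \exp(-\gamma(t)\ell^\Omega(\rvy))$, and then transfer it to the random argument $D_\theta(\rvx_t;t)$ using the same concentration facts as in the proof of \cref{theorem: theorem}. For the pointwise part: since $\ell^\Omega \geq 0$ and $\ell^\Omega(\rvy)=0$ iff $\rvy\in\Omega$, we have $g_t(\rvy)=1$ for every $\rvy\in\Omega$ and every $t$. For $\rvy\notin\Omega$ we have $\ell^\Omega(\rvy)>0$, so $\gamma(t)\to\infty$ forces $g_t(\rvy)\to 0$. Hence $g_t\to\1_\Omega$ pointwise on $\gX$. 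This is exactly the MBM argument, with the evaluation point left free.

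Next I would split the error by the triangle inequality:
\begin{align}
|g_t(D_\theta(\rvx_t;t)) - \1_\Omega(\rvx_t)| \leq |g_t(D_\theta(\rvx_t;t)) - g_t(\rvx_t)| + |g_t(\rvx_t) - \1_\Omega(\rvx_t)|. \nonumber
\end{align}
I would treat the second term by conditioning on $\rvx_0\sim p_0$. By the forward-kernel assumption $\rvx_t\xrightarrow{p}\rvx_0$, and $\rvx_0\in\Omega$ almost surely. I would split $\Omega$ into its interior and its boundary and assume $p_0(\partial\Omega)=0$, which I would state explicitly as a mild regularity condition. If $\rvx_0$ lies in the interior, then $\rvx_t$ eventually lies in a neighbourhood $U\subset\Omega$ with high probability. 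On $U$ we have $\ell^\Omega\equiv 0$, so both $g_t(\rvx_t)$ and $\1_\Omega(\rvx_t)$ equal $1$ and the second term vanishes exactly.

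For the first term, $g_t$ is Lipschitz on any compact set $K$: $|g_t(\mathbf a)-g_t(\mathbf b)|\leq \gamma(t)|\ell^\Omega(\mathbf a)-\ell^\Omega(\mathbf b)|\leq \gamma(t) G_K\norm{\mathbf a-\mathbf b}$, with $G_K=\sup_K\norm{\nabla\ell^\Omega}<\infty$ since $\ell^\Omega\in C^1$. Again I would avoid the factor $\gamma(t)$ near the interior. By denoising consistency, $D_\theta(\rvx_t;t)-\rvx_t\xrightarrow{p}0$, so both $D_\theta(\rvx_t;t)$ and $\rvx_t$ enter the same neighbourhood $U$ with probability tending to one. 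There $g_t$ equals $1$ at both points, so the first term is exactly $0$ as well. Combining the two cases with $p_0(\partial\Omega)=0$ gives, for every $\varepsilon>0$, $\sP(|g_t(D_\theta(\rvx_t;t))-\1_\Omega(\rvx_t)|>\varepsilon)\to 0$.

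Finally, for the density statement I would write $p^\Omega(\rvx_t;t)=p(\rvx_t;t)\,g_t(D_\theta(\rvx_t;t))/Z_t$. Since $0\le g_t\le 1$, bounded convergence yields $Z_t=\meanp{p_t}{g_t(D_\theta)}\to p_0(\Omega)=1$, so the normalized density converges to $p(\rvx_t;t)\1_\Omega(\rvx_t)$.

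The main obstacle is the $\infty\cdot 0$ product $\gamma(t)\,\ell^\Omega(D_\theta(\rvx_t;t))$ near $\partial\Omega$. There the Lipschitz bound only helps if $\gamma(t)\norm{D_\theta(\rvx_t;t)-\rvx_t}\xrightarrow{p}0$, which the paper's assumptions do not provide. My first attempt would be to sidestep it with the interior/boundary decomposition and $p_0(\partial\Omega)=0$. If that fails, for instance if $p_0$ charges $\partial\Omega$, I would add an explicit rate assumption tying the growth of $\gamma$ to the denoising error.
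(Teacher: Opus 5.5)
Your proof is correct, but it is more careful than the paper's, which gives nothing beyond ``follows the argument of MBM''. That argument is the pointwise limit $\exp(-\gamma(t)\ell^\Omega(\rvx))\to\1_\Omega(\rvx)$ (your first paragraph), with $D_\theta(\rvx_t;t)$ tacitly treated as interchangeable with $\rvx_t$.

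You make the transfer to the random evaluation point explicit. This forces you to import the forward-kernel and denoising-consistency hypotheses of \cref{theorem: theorem}, since the statement itself has no link between $D_\theta$ and $\rvx_t$. It also forces you to add $p_0(\partial\Omega)=0$, and that hypothesis is necessary, not a convenience. Take $\gX=\sR$, $\ell^\Omega(x)=\max(x,0)^2$ (so $\Omega=(-\infty,0]$), $p_0=\delta_0$, $\rvx_t=\sigma(t)\mathbf z$, and $D_\theta\equiv 0$, the exact posterior mean. Both imported hypotheses hold. Yet $\exp(-\gamma(t)\ell^\Omega(D_\theta))\equiv 1$, while $\1_\Omega(\rvx_t)=\1\{\mathbf z\le 0\}$ is Bernoulli$(1/2)$ for every $t$, whatever $\gamma$ is. So your version is a correct theorem, whereas the statement as written is false even with those imported hypotheses.

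Do the neighbourhood step unconditionally, because the hypotheses only give unconditional convergence in probability. With $r=\mathrm{dist}(\rvx_0,\Omega^c)$, which is positive a.s.\ under your assumption,
$\sP\bigl(g_t(D_\theta(\rvx_t;t))\neq\1_\Omega(\rvx_t)\bigr)\le\sP(r<\delta)+\sP(\norm{\rvx_t-\rvx_0}\ge\delta/2)+\sP(\norm{D_\theta(\rvx_t;t)-\rvx_t}\ge\delta/2)$.
With this bound the Lipschitz estimate with $G_K$ is never needed.

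The same bound shows what your route gives up relative to the pointwise one: $\gamma(t)\to\infty$ is never used. Under the data-forward measure both sides simply tend to $1$. The divergence of $\gamma$ only does work at limit points $\rvx^\ast$ with $\ell^\Omega(\rvx^\ast)>0$. There continuity gives $\ell^\Omega(D_\theta(\rvx_t;t))\to\ell^\Omega(\rvx^\ast)>0$, so the exponential tends to $0$, and $\1_\Omega(\rvx_t)\to 0$ because $\Omega^c$ is open. Suppose you want the statement under a measure that charges $\Omega^c$, e.g.\ an imperfect model marginal, which is the case constraint enforcement is for. Then you must add this exterior case, together with denoising consistency under that measure.

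Your fallback for $p_0(\partial\Omega)>0$ would not help. In the same example, $\gamma(t)\norm{D_\theta-\rvx_t}=\gamma(t)\sigma(t)|\mathbf z|\to 0$ for any slowly growing $\gamma$, and the claim still fails. The obstruction is not only the $\infty\cdot 0$ product: $\1_\Omega(\rvx_t)$ itself never settles when $\rvx_0$ sits on $\partial\Omega$. Keep $p_0(\partial\Omega)=0$ as an explicit hypothesis.
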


\section{Connection to SOC in Constrained Generation}
\label{sec: connection-to-soc}
Stochastic optimal control (SOC) studies the problem of modifying stochastic dynamics in order to reduce task-specific costs, while preserving the base process behavior. In a standard SOC formulation, the system evolves according to controlled stochastic dynamics $\sP^u$ of the form, in contrast to base stochastic dynamics $\sP^{\mathrm{base}}$:
\begin{align}
    \sP^{\mathrm{base}} &: d\rvx_t = b(\rvx_t, t)\,dt + \sigma(t)\,d\rvw, \\
    \sP^u &: d\rvx_t^u = (b(\rvx_t^u, t) + \sigma(t) u(\rvx_t^u, t))\,dt + \sigma(t)\,d\rvw, \label{eq: controlled-sde}
\end{align}
where $b: \sR^d\times [0, T]\to \sR^d$ denotes the base pretrained drift, and $u: \sR^d\times [0, T]\to \sR^d$ is the control policy drawn from a set of admissible controls $\gU$. The objective is to optimize the trade-off between the control and control effect, discouraging excessive deviation from the base dynamics while achieving low control by optimizing the cost functional:
\begin{align}
\label{eq: am-objb}
    V(\rvx_t; t) = \min_{u\in \gU} J(u; \rvx_t, t) = \min_{u\in \gU} &\meanp{}{\int_0^T \left(\frac 12 \norm{u(\rvx_t^u, t)}^2 + f(\rvx_t^u, t)\right)\,dt + g(\rvx_T^u)},
\end{align}
where $f$ denotes the running state cost, evaluated at each intermediate state, and $g$ denotes the terminal state cost, evaluated at the final state $\rvx_T$. Notice that the accumulated control cost $\int \frac 12 \norm{u(\rvx_t^u, t)}^2\,dt$ is exactly the KL divergence between the controlled and uncontrolled path measure, denoted as $D_{\mathrm{KL}}(\sP^u\parallel \sP^{\mathrm{base}})$ by Girsanov's theorem.

The constrained generation problems studied in our experiments can be naturally interpreted through the lens of stochastic optimal control (SOC). We assume a pretrained generative model that already approximates the data distribution well, but may produce samples that violate task-specific constraints. This pretrained model defines a base score $s_\theta(\cdot; \cdot)$, analogous to the base dynamics $b(\cdot)$ in SOC. Rather than relearning the data distribution, our goal is to incorporate constraint information in a way that minimally perturbs these base dynamics. 

\subsection{Adjoint Matching and Constrained Fine-tuning}
Adjoint matching~\citep{domingo2024adjoint} provides a general mechanism for solving such SOC problems under cost-affine control formulations, and applies broadly to controlled diffusion systems regardless of the specific form of the objective.

Classical results~\citep{kappen2005path, domingo2024stochastic} show that the optimal control is explicitly determined by the gradient of the optimal value function $V(\rvx_t; t) = \min_{u\in \gU} J(u; x, t)$,
\begin{align}
    u^*(\rvx_t, t) = -\sigma(t)^\top \nabla_{\rvx_t} V(\rvx_t, t),
\end{align} 
where the optimal value function admits an expectation-based formulation with respect to the uncontrolled base dynamics. This reveals an inherent coupling: computing the optimal control requires knowledge of the value function, while the value function itself depends on the controlled dynamics induced by the optimal control, so learning $u$ without an accurate value estimate generally yields only locally optimal solutions. This coupling underlies the difficulty of learning optimal controls for solving SOC problem in practice.

In addition, evaluating the value function requires integrating the running cost $f(\cdot; \cdot)$ along trajectories $\{\rvx_t\}_{t=0}^T$ of the controlled SDE and computing the terminal cost $g(\cdot)$ at the final state $\rvx_T$. Since both intermediate and terminal states depend on the learned control term $u_\theta(\rvx_t; t)$, training requires sampling full trajectories from the controlled dynamics and to compute the gradient of the objective:
\begin{align}
\label{eq: direct-diff}
    \nabla_\theta \gL(u_\theta; \bar\rvx) = \nabla_\theta \left(\int_0^T \left(\frac 12 \norm{u_\theta(\rvx_t, t)}^2 + f(\rvx_t; t)\right)\,dt + g(\rvx_T)\right) := \nabla_{\theta} J(u; \rvx_0, 0)
\end{align}
Directly differentiating through these simulations in~\cref{eq: direct-diff} is computationally prohibitive, and adjoint matching therefore adopts an adjoint method, which is analogues to the method proposed from neural ODE, bypassing backpropagation through the simulation. The adjoint state is defined the gradient of the remaining cost-to-go with respect to the state $\bar\rvx_t = \{\rvx_t\}_{t}^T$:
\begin{align}
    a(t; \bar\rvx_t, u_\theta) = \nabla_{\rvx_t} \left(\int_t^1 \left(\frac 12\norm{u_\theta(\rvx_{t'}; t')}^2+ f(\rvx_{t'}; t') \right)\,dt + g(\rvx_T)\right) := \nabla_{\rvx_t} J(u; \rvx_t, t),
\end{align}
satisfying adjoint ODE
\begin{align}
    \label{eq: adjoint-ode}
    a(t; \bar\rvx_t, u_\theta) &= -\left[a(t; \bar\rvx_t, u_\theta)^\top (\nabla_{\rvx_t} (b(\rvx_t; t) + \sigma(t) u(\rvx_t; t))) + \nabla_{\rvx_t} (f(\rvx_t; t) + \frac 12 \norm{u_\theta(\rvx_t; t)}^2)\right] \nonumber \\
    a(T; \bar\rvx_T, u_\theta) &= \nabla_{\rvx_T} g(\rvx_T)
\end{align}
Under the assumption that the control is optimal $u^*(\rvx_t; t) = \meanp{\sP^*}{-\sigma(t)^\top a(t; \bar\rvx_t, u^*)\mid \rvx_t}$, the adjoint dynamics simplify to~\citet{domingo2024adjoint}
\begin{align}
\label{eq: lean-adj-ode}
    \frac{d}{dt}a(t; \bar\rvx_t) = -(a(t; \rvx_t)^\top \nabla_{\rvx_t} b(\rvx_t; t) + \nabla_{\rvx_t} f(\rvx_t; t)), \quad a(T; \bar\rvx_T) = \nabla_{\rvx_T} g(\rvx_T).
\end{align}

The training procedure is as follows:
\begin{itemize}
    \item Forward rollout: sample trajectories by rolling out the controlled SDE~\cref{eq: controlled-sde} forward in time. The control $u_\theta$ is parameterized implicitly as the difference between the fine-tuned model and the pretrained base model. At initialization, the fine-tuned model is identitcal to the pretrained model, and therefore $u_\theta = 0$.
    \item Backward adjoint solving: given the sampled trajectory, solve the adjoint ODE~\cref{eq: lean-adj-ode} backward in time from the terminal state.
    \item Compute the adjoint matching objective:
    \begin{align}
        \gL_{\mathrm{adj-match}}(\theta) = \sum_t \normscaled{u_\theta(\rvx_t; t) + \sigma(t)^\top a(t)}^2,
    \end{align}
    and update $\theta$ via gradient descent using $\nabla_\theta \gL(\theta)$.
\end{itemize}

Under this formulation, the control term is not learned explicitly, but instead emerges implicitly through fine-tuning the full model parameters so that the learned dynamics absorb the control effect, with exponential moving average (EMA) used to stabilize training. In contrast, we freeze the pretrained backbone and introduce a lightweight bridge embedding as the only trainable component. As described in~\cref{sec: bridge-finetune}, we leave the pretrained backbone parameters unchanged and absorb the constraint-induced control through the bridge embedding. This design enforces feasibility with minimal perturbation to the distribution induced by the base model, avoiding full model fine-tuning while providing a stable and parameter-efficient alternative to adjoint matching.

While adjoint matching avoids storing the full computation graph of the numerical solver, it still requires to solve the controlled SDE~\cref{eq: controlled-sde} from $t=0$ to $T$ and an adjoint ODE~\cref{eq: lean-adj-ode} from $t=T$ to $0$, where the adjoint dynamics evaluate the running cost $f(\cdot; \cdot)$ and its states gradient along the entire trajectory in~\cref{eq: lean-adj-ode}, leading to inherently sequential and time-consuming training. In practice, evaluating running cost $f$ at every intermediate state is already computationally expensive, and computing its state gradient further exacerbates this cost. Consequently the practical algorithm in Adjoint Matching omits this term for efficiency.

Our constrained fine-tuning approach follows the same high-level objective of preventing infracting samples while preserving the baseline generative behavior, but avoids simulation and adjoint backpropagation during training. Instead of estimating costs by integrating along controlled SDE trajectories and backpropagating through the resulting adjoint dynamics, we apply constraint costs directly to a one-step denoised estimate $D_\theta(\rvx_t; t)$ at intermediate noisy state, using it as a surrogate for the terminal state . This eliminates running cost evaluation and backpropagation of entire trajectory, requiring only the evaluation of a surrogate terminal loss $\ell^\Omega(D_\theta(\rvx_t; t))$, and substantially reducing computational and memory overhead. As a result, our method achieves more efficient training, and empirically produces higher-quality non-infracting samples while remaining closely.

\section{More details of the Experiments}
\subsection{Bouncing Balls}
\subsubsection{Training Details}
\paragraph{Architecture} The architectures are based on that of \citet{zhang2022motiondiffuse} which is a transformer-based \citep{vaswani2017attention} architecture for human motion generation. In contrast to our experiment which involves multiple agents (i.e., the 10 balls in each sample), human motion data only involves one agent. Therefore, we modify the architecture of \citet{zhang2022motiondiffuse} by adding a cross-attention layer after each self-attention layer so that it can model the dependence of each agent on the other ones at the same time step. Furthermore, to save computational cost, we replace the temporal attention layers with Longformer \citep{beltagy2020longformer} with a window size of 10. In our preliminary results Longformer led to a faster convergence. In order to condition on the bridge input we simply apply an MLP layer to the bridge input and add it to the first layer hidden states of the model.

\paragraph{Diffusion Process} We use EDM~\citep{karras2022elucidating} framework and change the diffusion schedule to log-linear. In this experiment we choose $\sigma_{\mathrm{min}} = 3\times 10^{-5}$ and $\sigma_{\mathrm{max}} = 80$.

\paragraph{Flow Matching Process} We follows the standard flow matching practice~\citep{lipman2024flow} with a linear interpolation probability path between noise and data. Instead of directly predicting the velocity field, the model is parameterized to predict the clean data, which implicitly defines the corresponding velocity through the interpolation path with ~\cref{eq: fm-param}. During training, the interpolation time $t$ is sampled from a logit-normal distribution with mean $-0.6$ and standard deviation $1.6$.

\paragraph{Training} We first train both standard diffusion and flow matching models for 950,000 iterations with a learning rate of $3 \times 10^{-4}$ on a Tesla V100 GPU which takes $70$ hours respectively. All other models are fine-tuned from this checkpoint for 950,000 iterations with a learning rate of $3 \times 10^{-5}$, which takes 72 hours respectively on a Tesla P100 GPU. For all models we used Adam optimizer with a batch size of 32.

\paragraph{Diffusion Sampling} We use the first-order Euler-Maruyama sampler~\citep{songscore} with a log-linear schedule of $\sigma$, 200 time steps, and $S_{\text{churn}} = 10$.

\paragraph{Flow Matching Sampling} We use first-order Euler ODE solver with a uniform schedule of $t$, 50 time steps.

\subsection{Traffic Scene Trajectory}
\subsubsection{Training Details}
\paragraph{Architecture} This baseline architecture is from~\citet{niedoba2024diffusion}, whose architecture is transformer-based~\citep{vaswani2017attention}, which is composed of an MLP encoder, a stack of self-attention residual blocks to enable the interactions between vehicles and cross-attention residual blocks to accommodate the interactions between vehicles and CNN-embedded~\citep{lecun1995convolutional} road representations, and a MLP decoder. The original data input is of the shape $[B, A, T, F]$ corresponding to $A$ vehicles with $F$ features rolling $T$ trajectory steps in a batch of $B$ traffic scenes. In this experiment by convention, $T = 40$. For the fine-tuning model \method{}, the architecture has extra two-layer MLP embeddings for the gradients of collision and offroad losses. 
\paragraph{Diffusion Process} We use EDM~\citep{karras2022elucidating} framework and change the diffusion schedule to log-linear. In this experiment we choose $\sigma_{\mathrm{min}} = 2\times 10^{-4}$ and $\sigma_{\mathrm{max}} = 80$.
\paragraph{Training} We first train a standard diffusion model with a learning rate of $3 \times 10^{-4}$ on a Tesla V100 GPU which takes 144 hours. The~\method{} is fine-tuned based on the standard diffusion model for 72 hours with a learning rate of $3 \times 10^{-5}$ on two NVIDIA L40s GPUs . For all models we used Adam optimizer with a batch size of 32.
\paragraph{Sampling} We use the first-order Euler-Maruyama sampler~\citep{songscore} with a log-linear schedule of $\sigma$, 200 time steps, and $S_{\text{churn}} = 10$.

\subsubsection{More Experimental Results}
\begin{table*}[!h] 
  \centering
  \small
  \caption{Trajectory Prediction on INTERACTION~\citep{zhan2019interaction}: DR\_DEU\_Roundabout\_OF.}  
  \begin{tabular}{llllll}
    \toprule
    \centering
    Method     & Collision ($\%$)  $\downarrow$  & Offroad ($\%$) $\downarrow$ & $\min\text{ADE}_6$ $\downarrow$  & $\min\text{FDE}_6$ $\downarrow$ &  $\text{MFD}_6$ $\uparrow$ \\
    \midrule
    DJINN &  $1.48$ & $22.03$  & $0.324$ & $1.043$ & $5.266$  \\
    CriticSMC &  $0.08$  & -- & $0.445$ & -- & $3.425$   \\
    MPGD w/o projection & $1.62$ & $1.78$ & $0.338$ & $1.000$ & $4.787$ \\
    \midrule
    \method\ (Ours) & $1.57$  & $2.49$ & $0.327$ & $1.005$ & $4.684$ \\
    \bottomrule
  \end{tabular}
  \label{}
\end{table*}

\begin{table*}[!h] 
  \centering
  \small
  \caption{Trajectory Prediction on INTERACTION~\citep{zhan2019interaction}: DR\_USA\_Intersection\_MA.}  
  \begin{tabular}{llllll}
    \toprule
    \centering
    Method     & Collision ($\%$)  $\downarrow$  & Offroad ($\%$) $\downarrow$ & $\min\text{ADE}_6$ $\downarrow$  & $\min\text{FDE}_6$ $\downarrow$ &  $\text{MFD}_6$ $\uparrow$ \\
    \midrule
    DJINN &  $1.24$ & $1.25$  & $0.231$ & $0.637$ & $3.990$  \\
    CriticSMC  &  $0.09$  & -- & $0.448$ & -- & $2.871$   \\
    MPGD w/o projection & $1.13$ & $0.04$ & $0.229$& $0.635$ & $3.882$ \\
    \midrule
    \method\ (Ours) & $0.92$  & $0.06$ & $0.211$ & $0.629$ & $3.834$ \\
    \bottomrule
  \end{tabular}
  \label{}
\end{table*}

\begin{table*}[!h] 
  \centering
  \small
  \caption{Trajectory Prediction on INTERACTION~\citep{zhan2019interaction}: DR\_USA\_Roundabout\_FT.}  
  \begin{tabular}{llllll}
    \toprule
    \centering
    Method     & Collision ($\%$)  $\downarrow$  & Offroad ($\%$) $\downarrow$ & $\min\text{ADE}_6$ $\downarrow$  & $\min\text{FDE}_6$ $\downarrow$ &  $\text{MFD}_6$ $\uparrow$ \\
    \midrule
    DJINN &  $1.10$ & $8.50$  & $0.236$ & $0.704$ & $3.757$  \\
    CriticSMC  &  $0.07$  & -- & $0.444$ & -- & $2.974$   \\
    MPGD w/o projection & $1.13$ & $0.26$ & $0.233$& $0.685$ & $3.594$ \\
    \midrule
    \method\ (Ours) & $1.12$  & $0.35$ & $0.219$ & $0.676$ & $3.544$ \\
    \bottomrule
  \end{tabular}
  \label{}
\end{table*}

\begin{figure*}[!th]
    \centering
    \begin{subfigure}[b]{0.32\textwidth}
        \centering
       \includegraphics[width=1\textwidth]{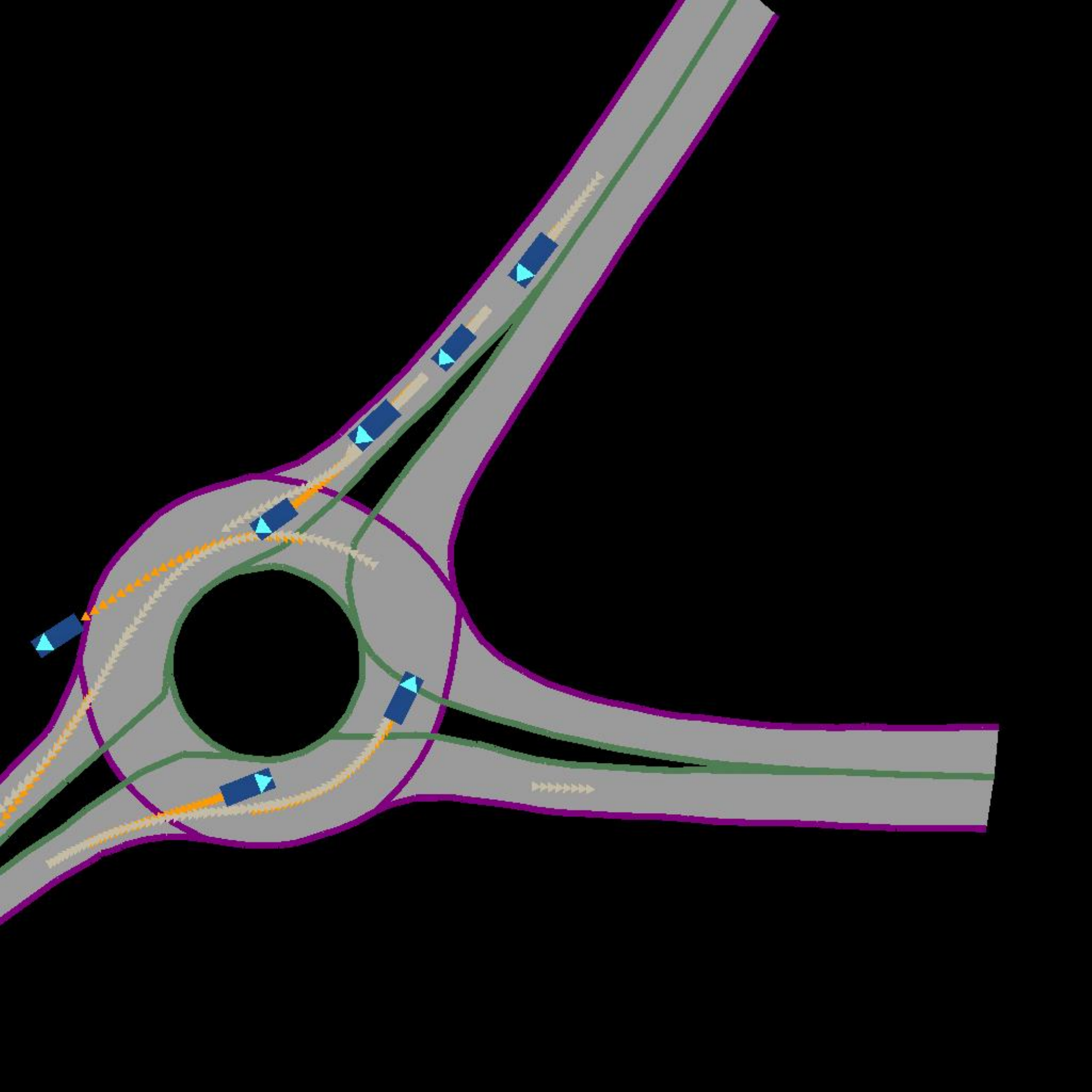}
       \caption*{Standard Diffusion}
        \label{}
    \end{subfigure}
    \hfill
    \begin{subfigure}[b]{0.32\textwidth}
        \centering
        \includegraphics[width=1\textwidth]{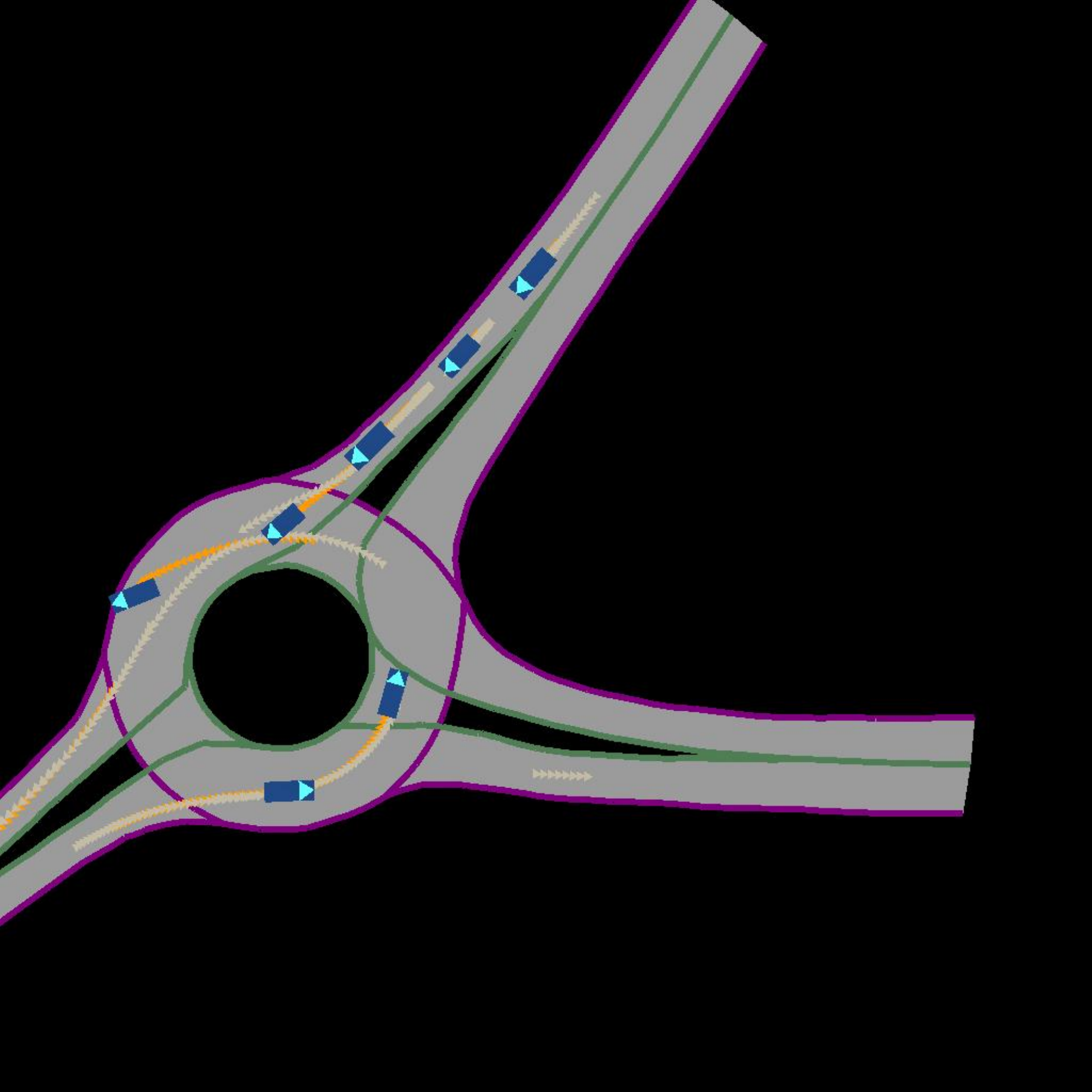}
        \caption*{MPGD w/o projection}
        \label{}
    \end{subfigure}
    \hfill
    \begin{subfigure}[b]{0.32\textwidth}
        \centering
        \includegraphics[width=1\textwidth]{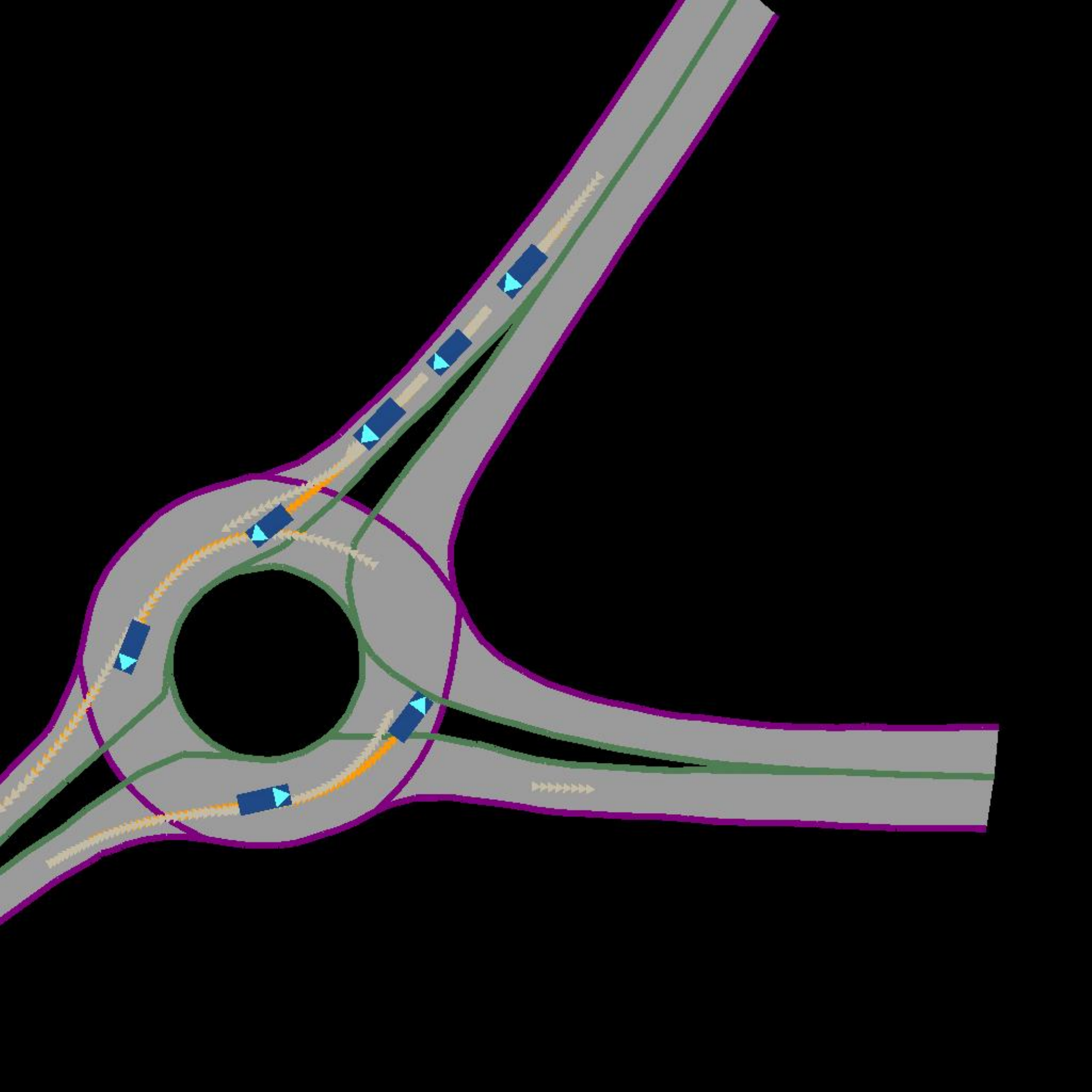}
        \caption*{\method{} (Ours)}
        \label{}
    \end{subfigure}
\end{figure*}

\begin{figure*}[!th]
    \centering
    \begin{subfigure}[b]{0.32\textwidth}
        \centering
       \includegraphics[width=1\textwidth]{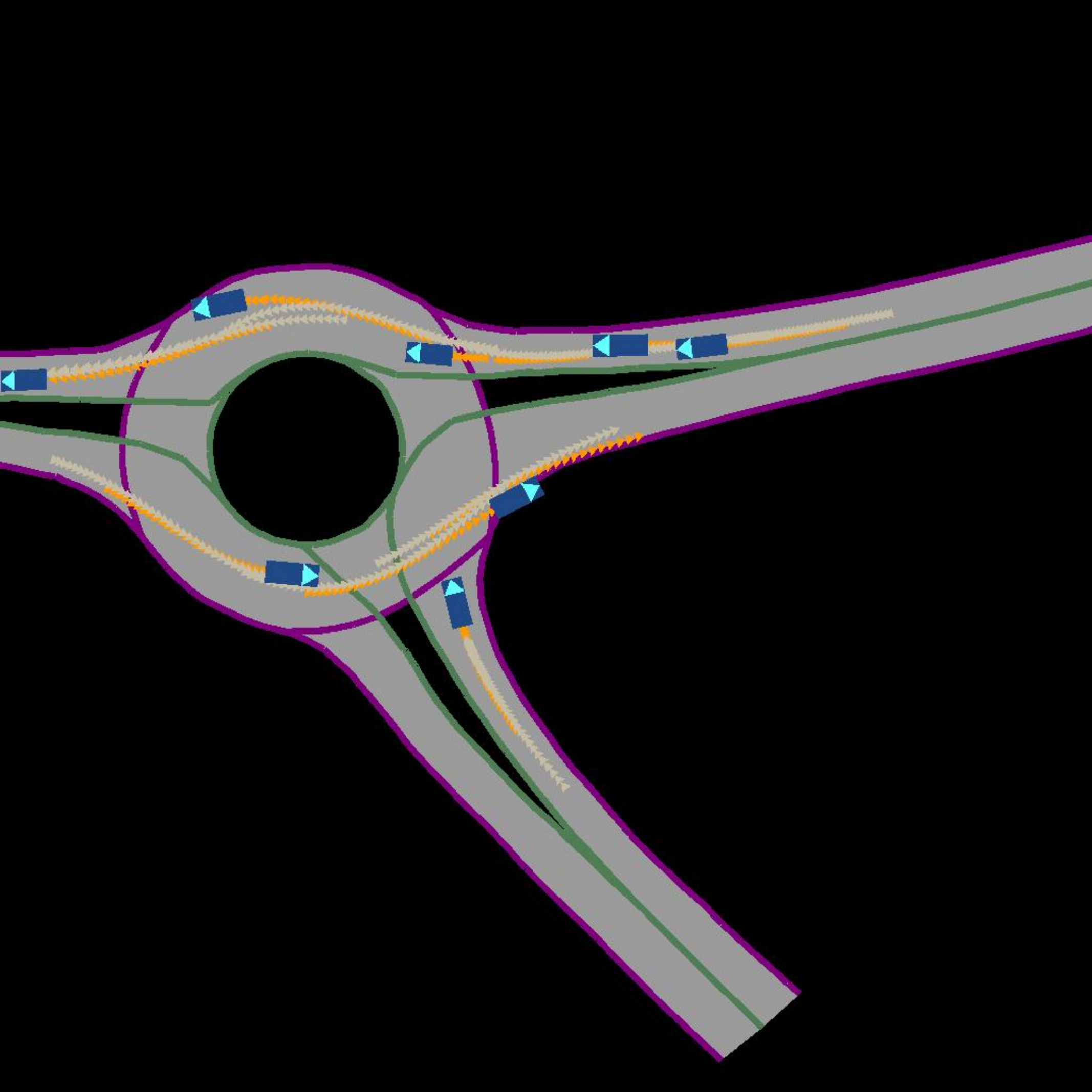}
       \caption*{Standard Diffusion}
        \label{}
    \end{subfigure}
    \hfill
    \begin{subfigure}[b]{0.32\textwidth}
        \centering
        \includegraphics[width=1\textwidth]{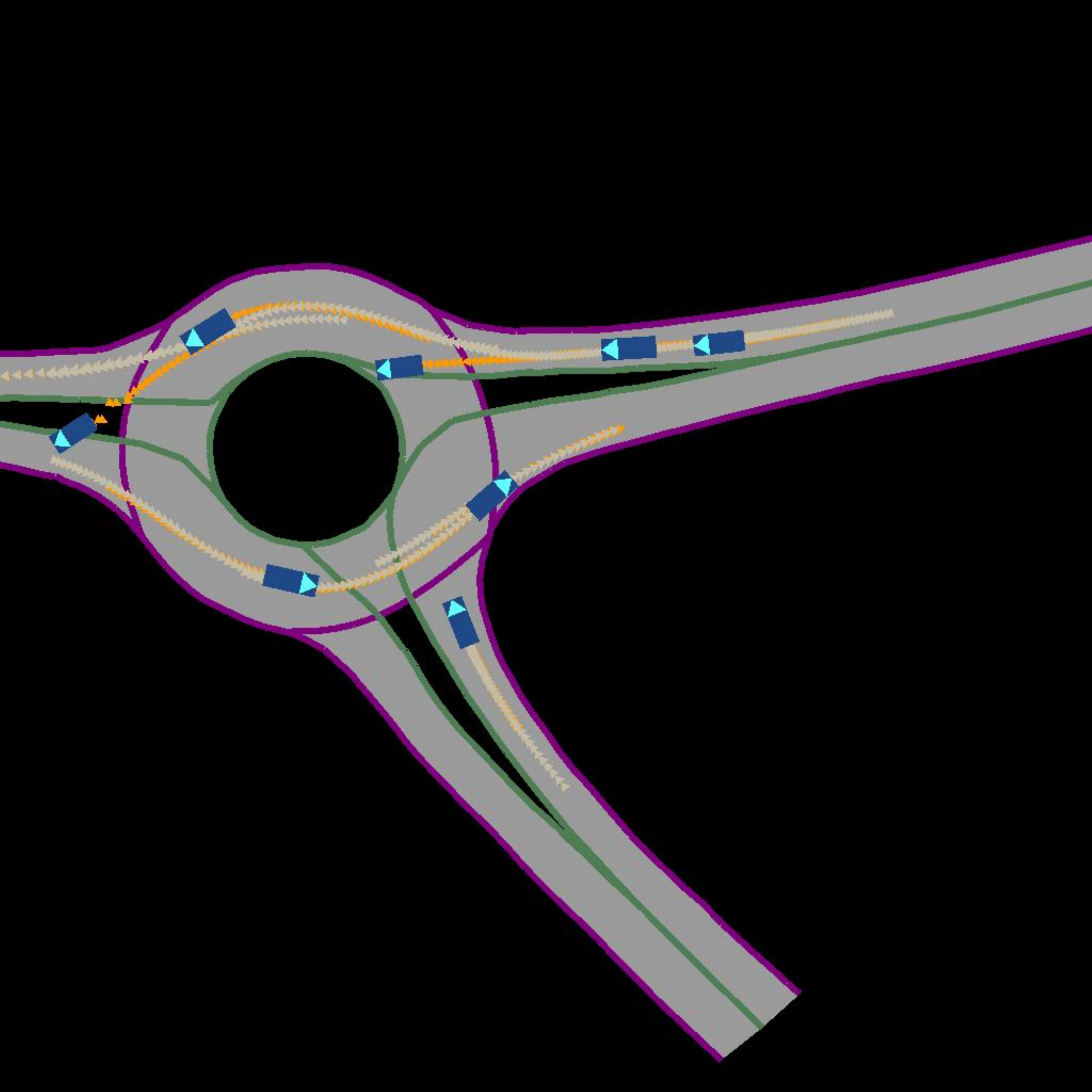}
        \caption*{MPGD w/o projection}
        \label{}
    \end{subfigure}
    \hfill
    \begin{subfigure}[b]{0.32\textwidth}
        \centering
        \includegraphics[width=1\textwidth]{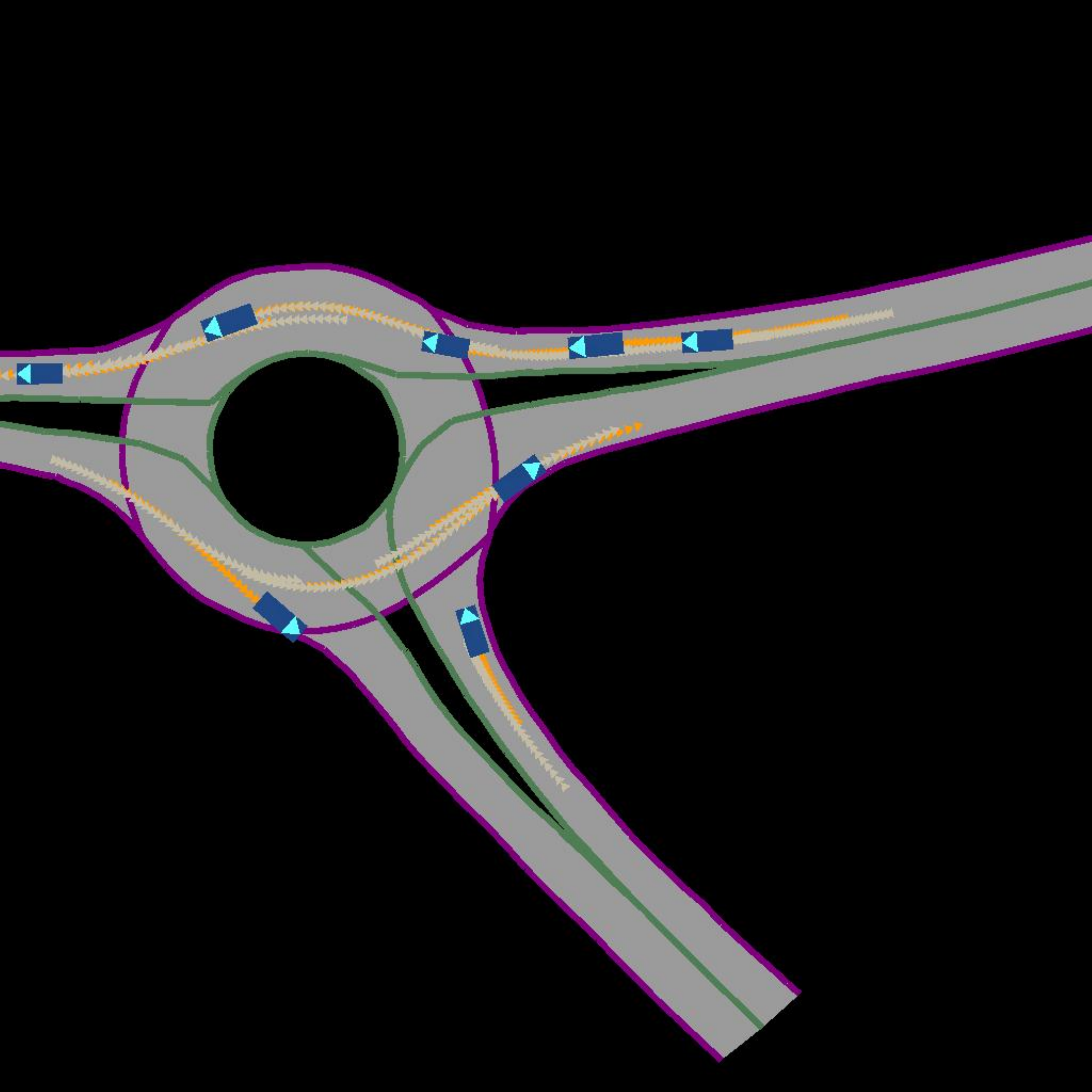}
        \caption*{\method{} (Ours)}
        \label{}
    \end{subfigure}
\end{figure*}

\begin{figure*}[!th]
    \centering
    \begin{subfigure}[b]{0.32\textwidth}
        \centering
       \includegraphics[width=1\textwidth]{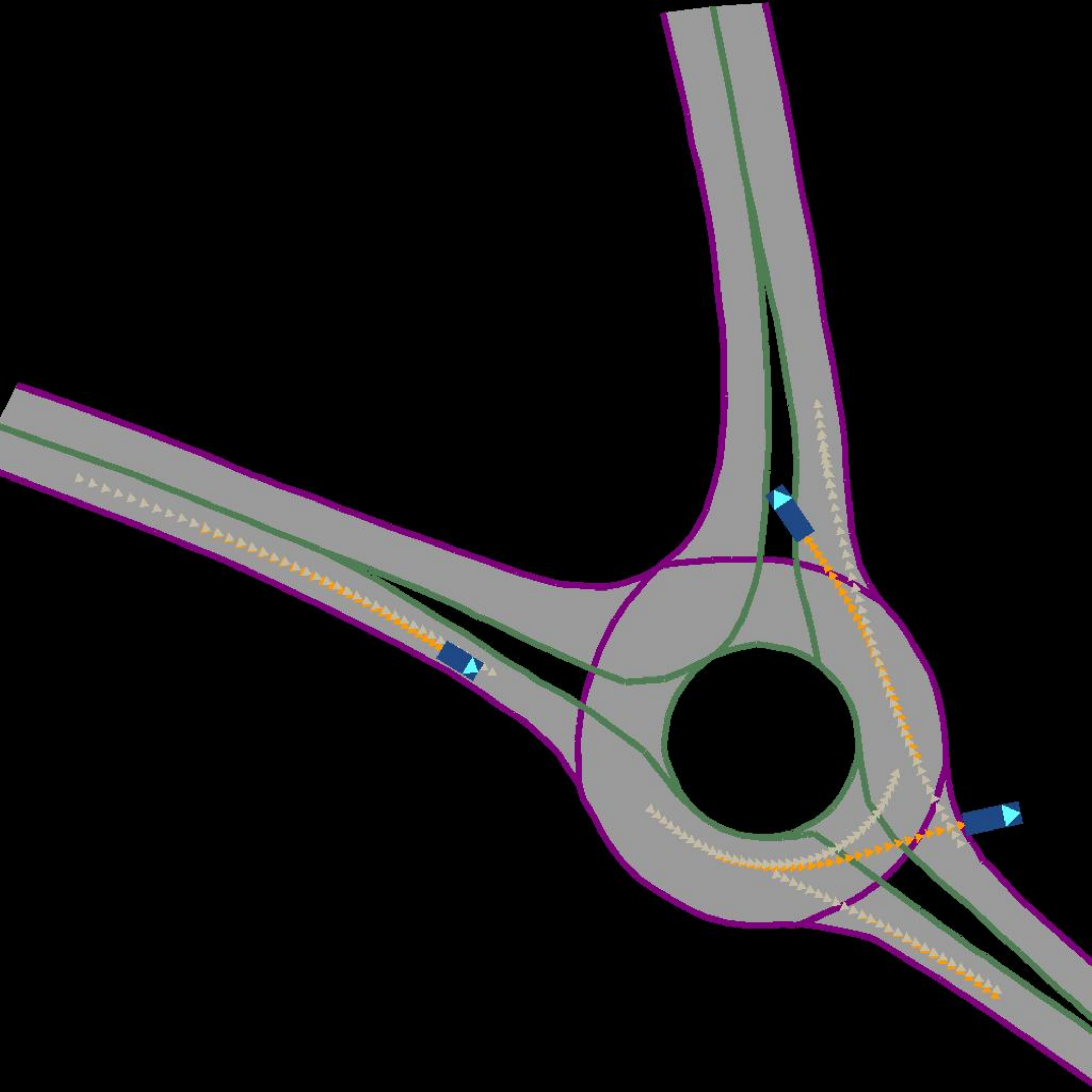}
       \caption*{Standard Diffusion}
        \label{}
    \end{subfigure}
    \hfill
    \begin{subfigure}[b]{0.32\textwidth}
        \centering
        \includegraphics[width=1\textwidth]{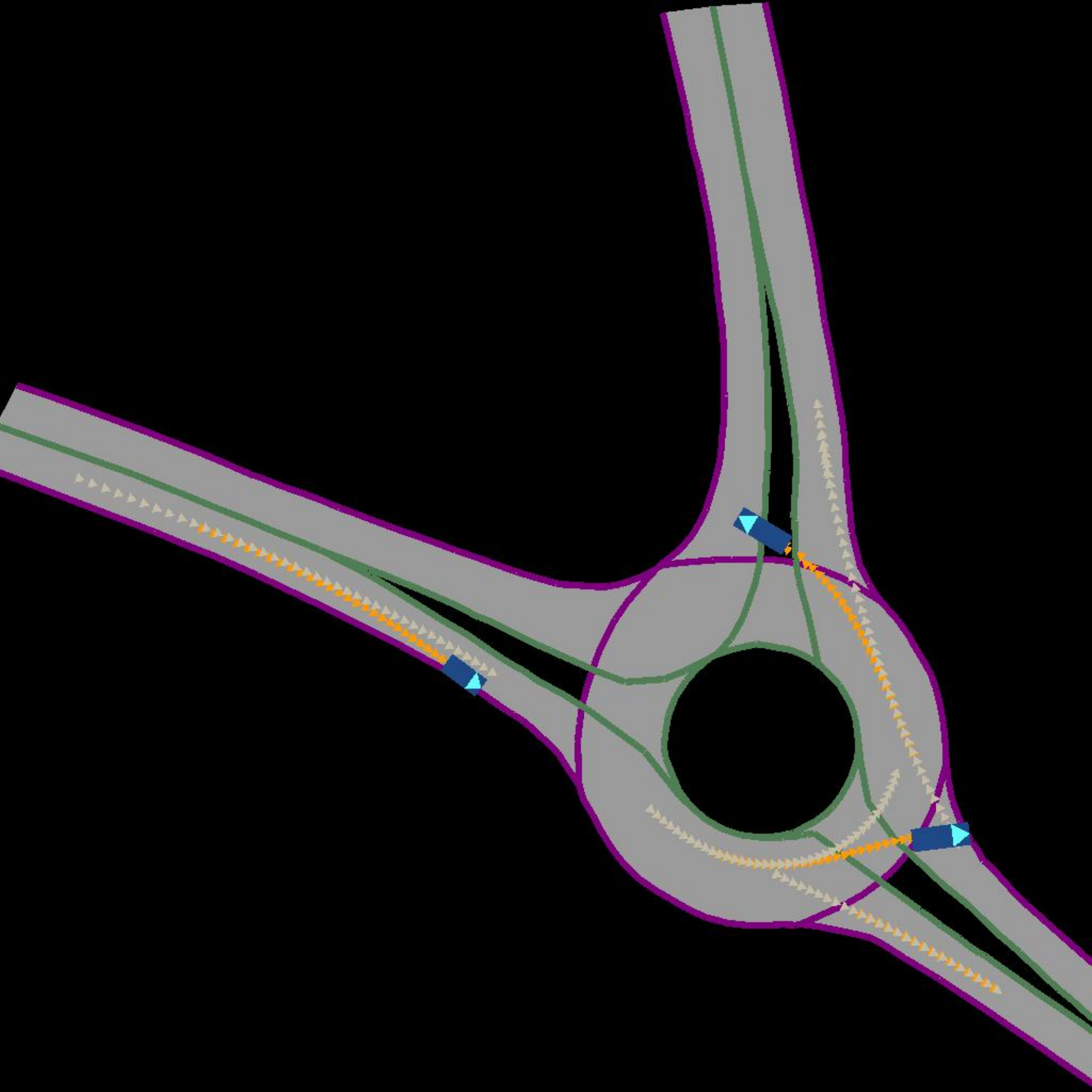}
        \caption*{MPGD w/o projection}
        \label{}
    \end{subfigure}
    \hfill
    \begin{subfigure}[b]{0.32\textwidth}
        \centering
        \includegraphics[width=1\textwidth]{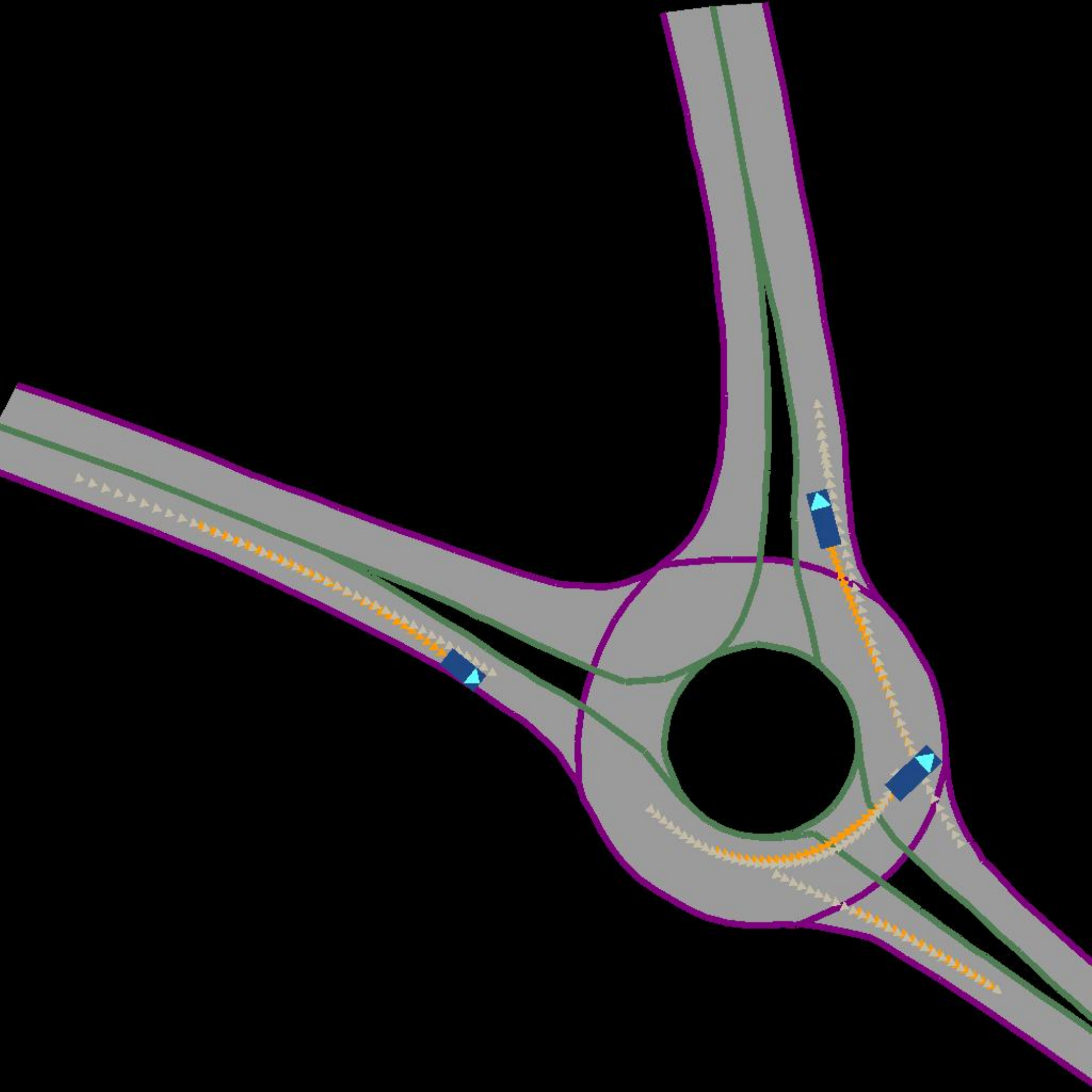}
        \caption*{\method{} (Ours)}
        \label{}
    \end{subfigure}
\end{figure*}

\begin{figure*}[!th]
    \centering
    \begin{subfigure}[b]{0.32\textwidth}
        \centering
       \includegraphics[width=1\textwidth]{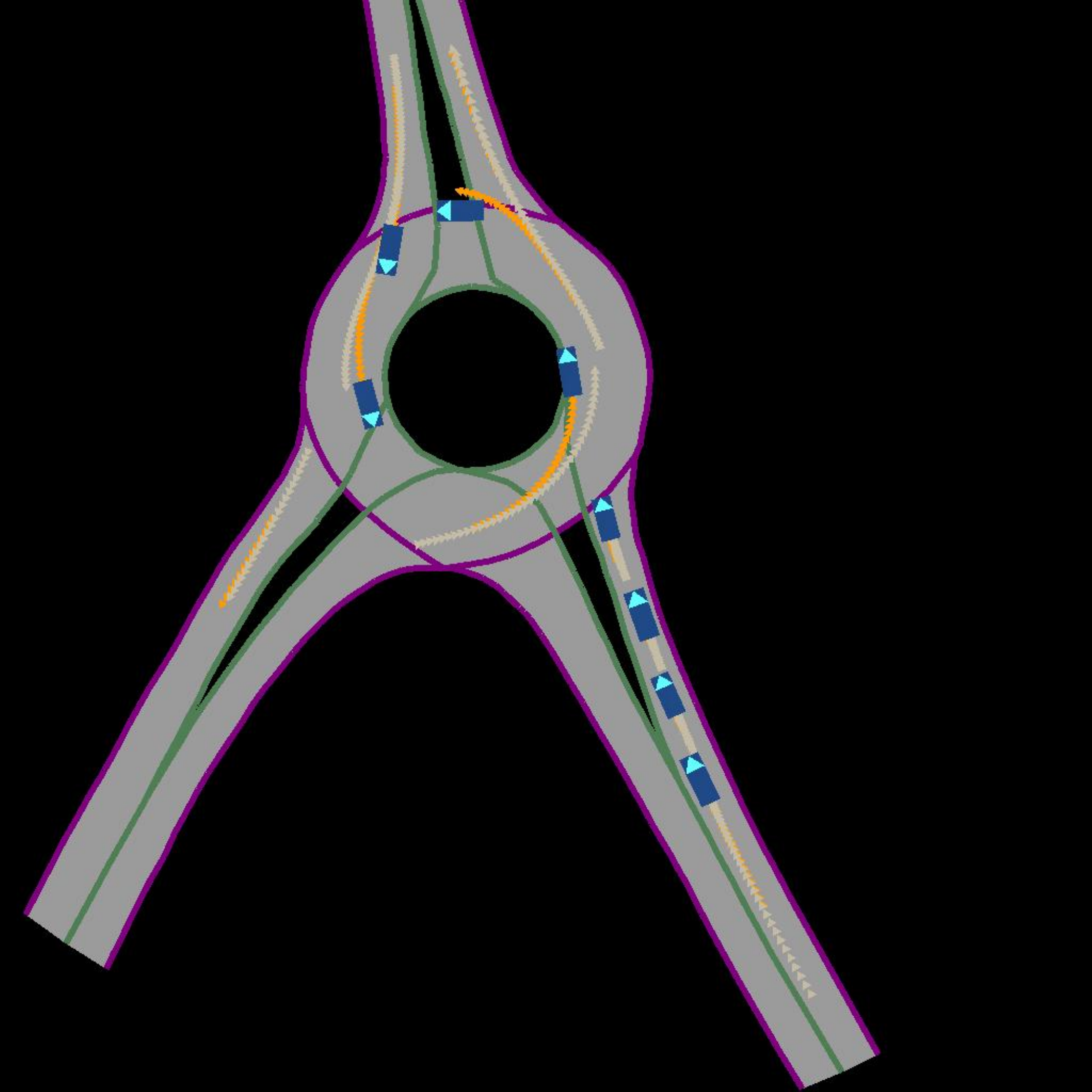}
       \caption*{Standard Diffusion}
        \label{}
    \end{subfigure}
    \hfill
    \begin{subfigure}[b]{0.32\textwidth}
        \centering
        \includegraphics[width=1\textwidth]{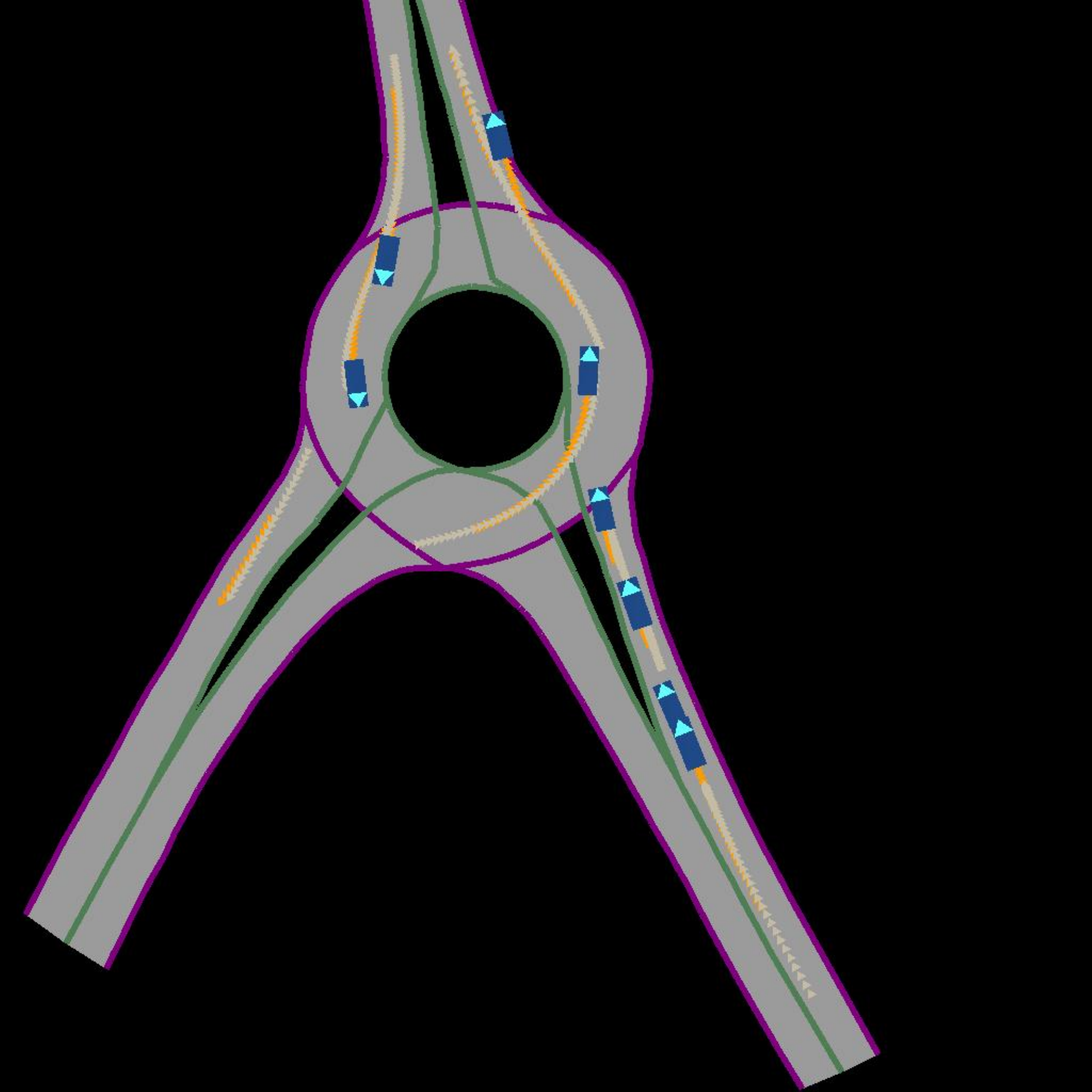}
        \caption*{MPGD w/o projection}
        \label{}
    \end{subfigure}
    \hfill
    \begin{subfigure}[b]{0.32\textwidth}
        \centering
        \includegraphics[width=1\textwidth]{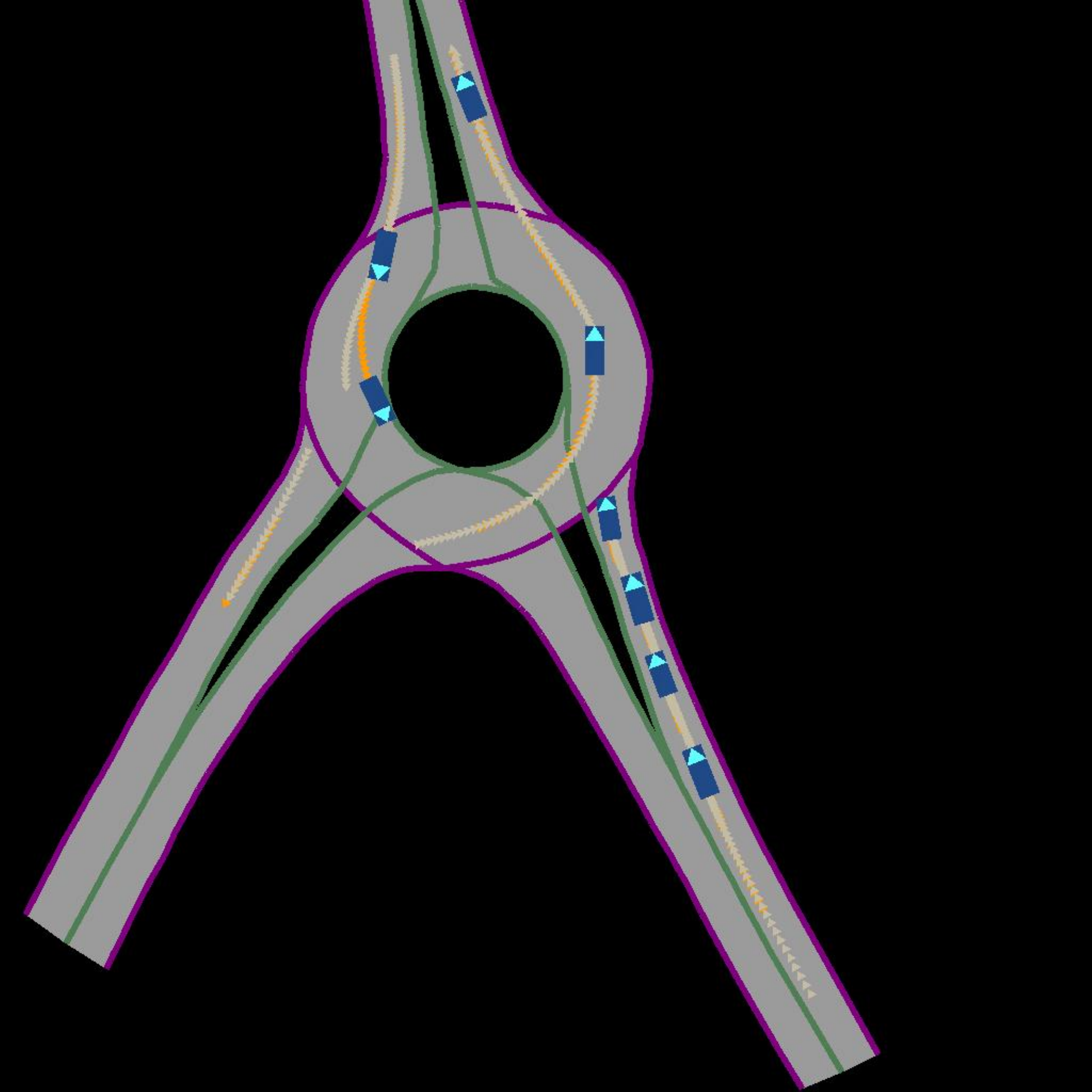}
        \caption*{\method{} (Ours)}
        \label{}
    \end{subfigure}
\end{figure*}

\begin{figure*}[!th]
    \centering
    \begin{subfigure}[b]{0.32\textwidth}
        \centering
       \includegraphics[width=1\textwidth]{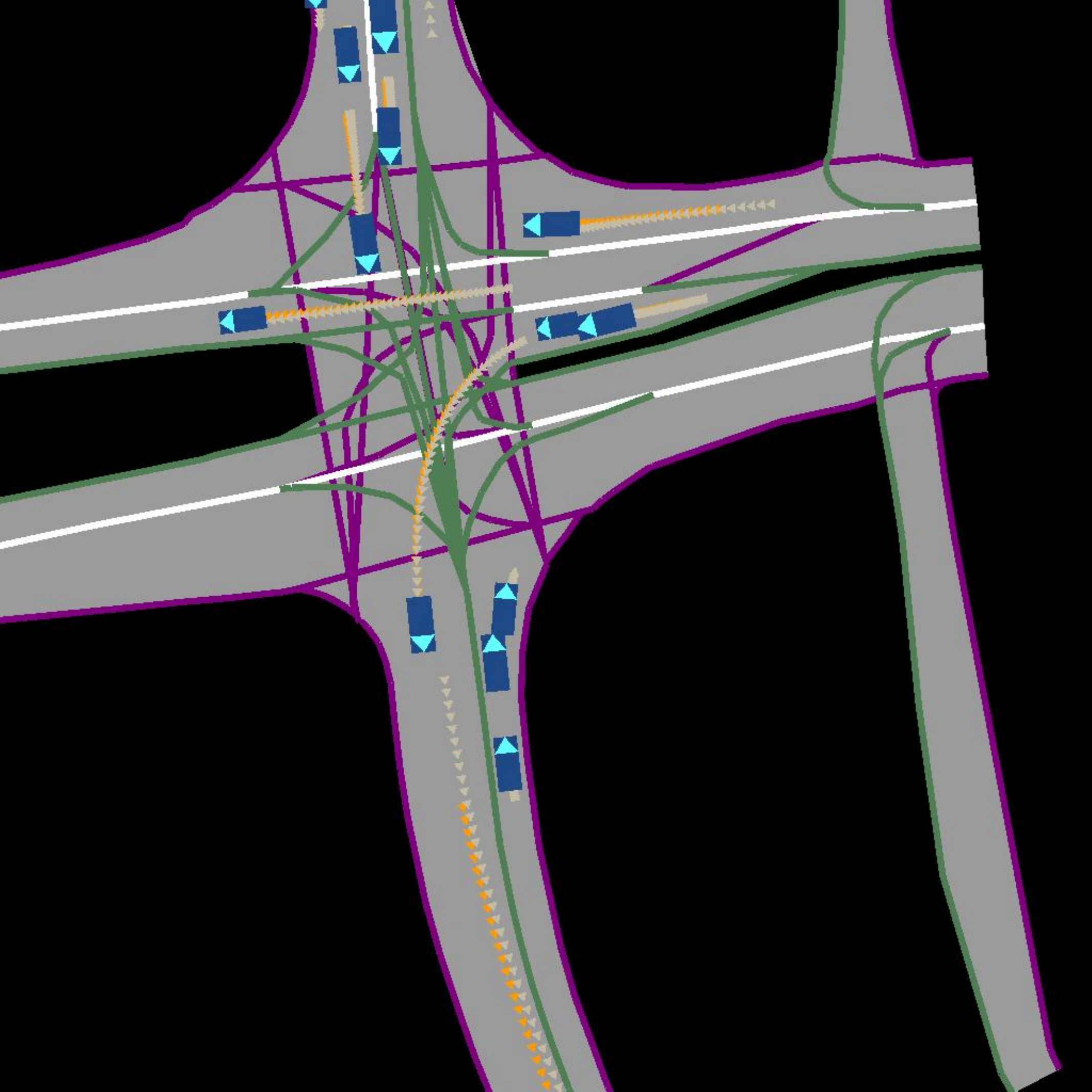}
       \caption*{Standard Diffusion}
        \label{}
    \end{subfigure}
    \hfill
    \begin{subfigure}[b]{0.32\textwidth}
        \centering
        \includegraphics[width=1\textwidth]{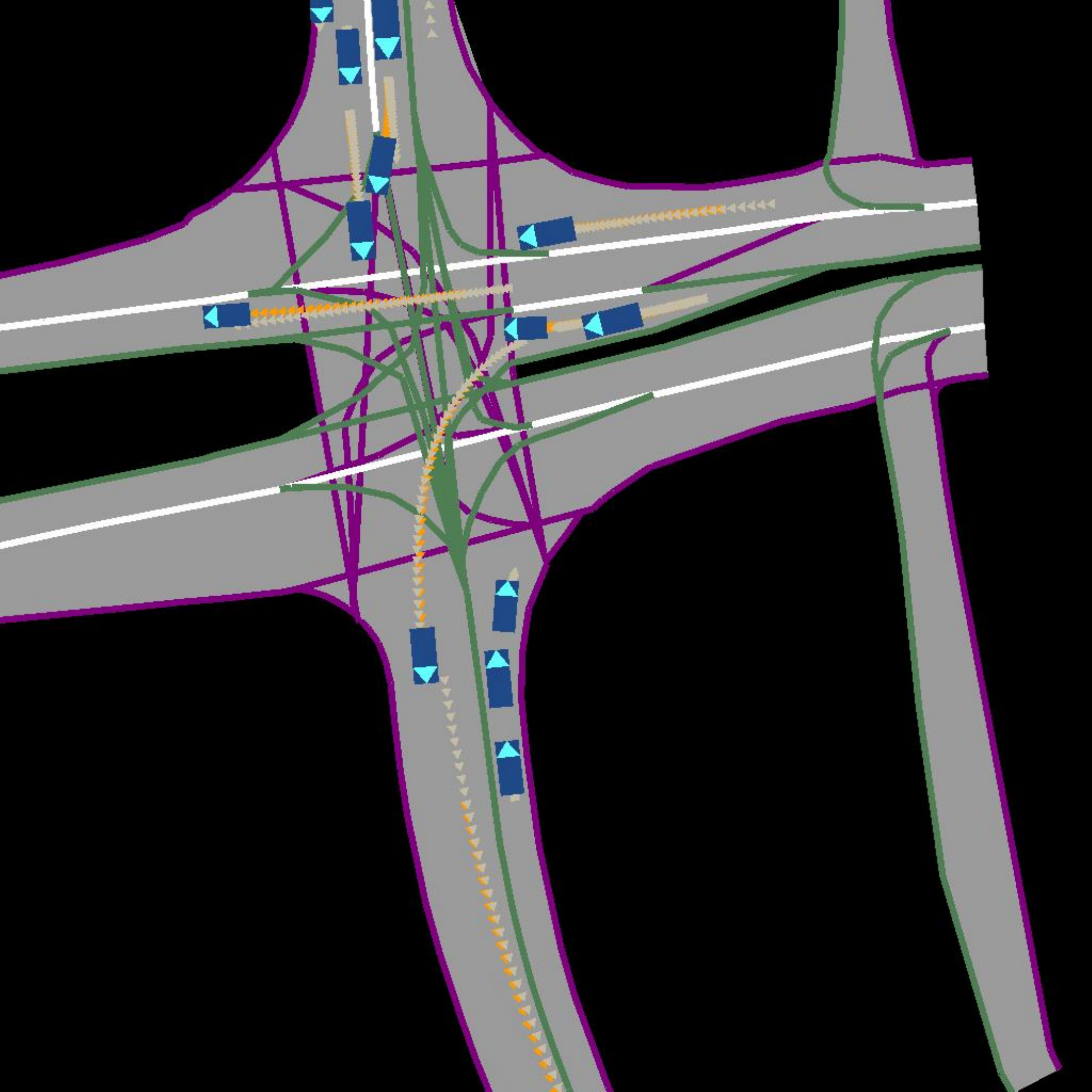}
        \caption*{MPGD w/o projection}
        \label{}
    \end{subfigure}
    \hfill
    \begin{subfigure}[b]{0.32\textwidth}
        \centering
        \includegraphics[width=1\textwidth]{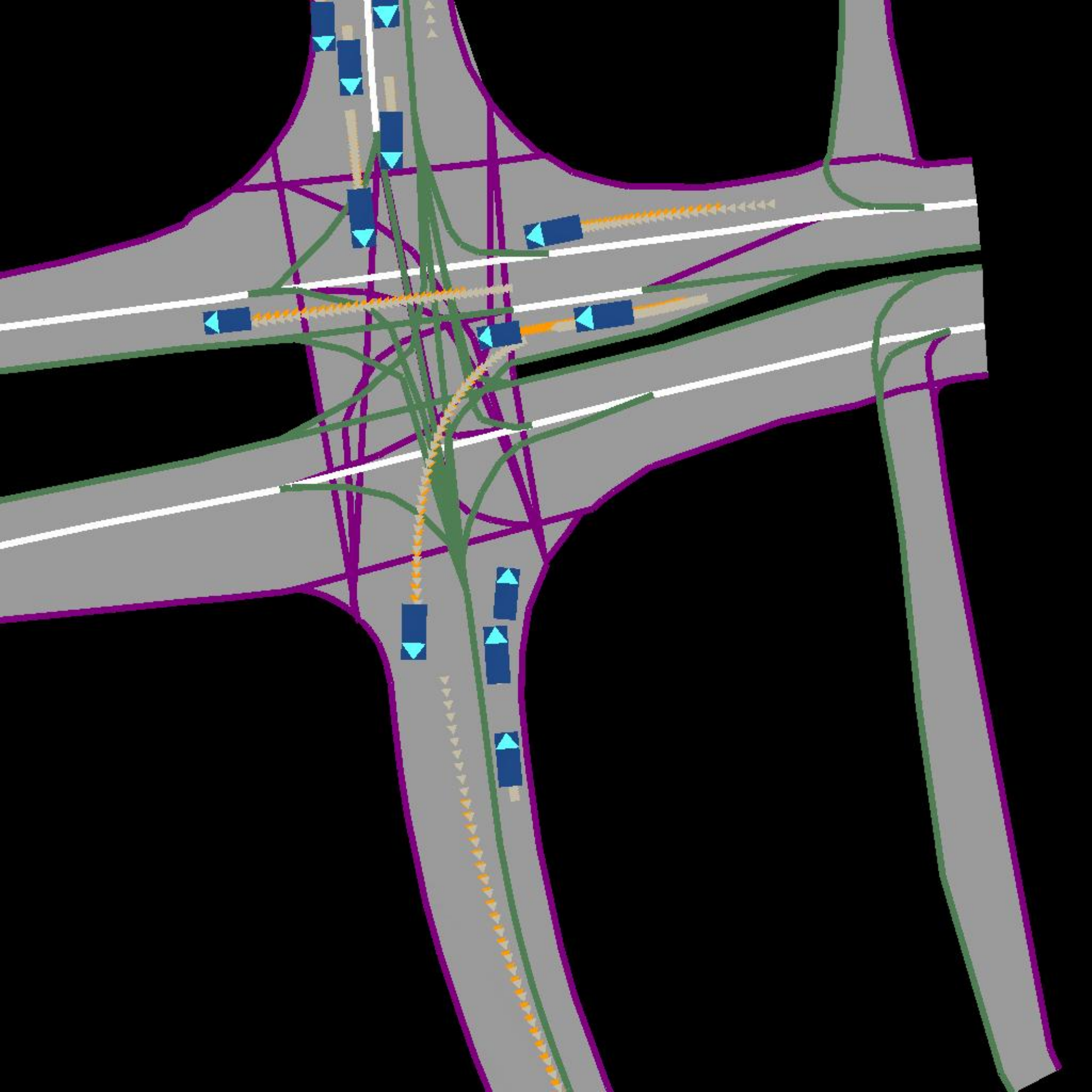}
        \caption*{\method{} (Ours)}
        \label{}
    \end{subfigure}
\end{figure*}

\begin{figure*}[!th]
    \centering
    \begin{subfigure}[b]{0.32\textwidth}
        \centering
       \includegraphics[width=1\textwidth]{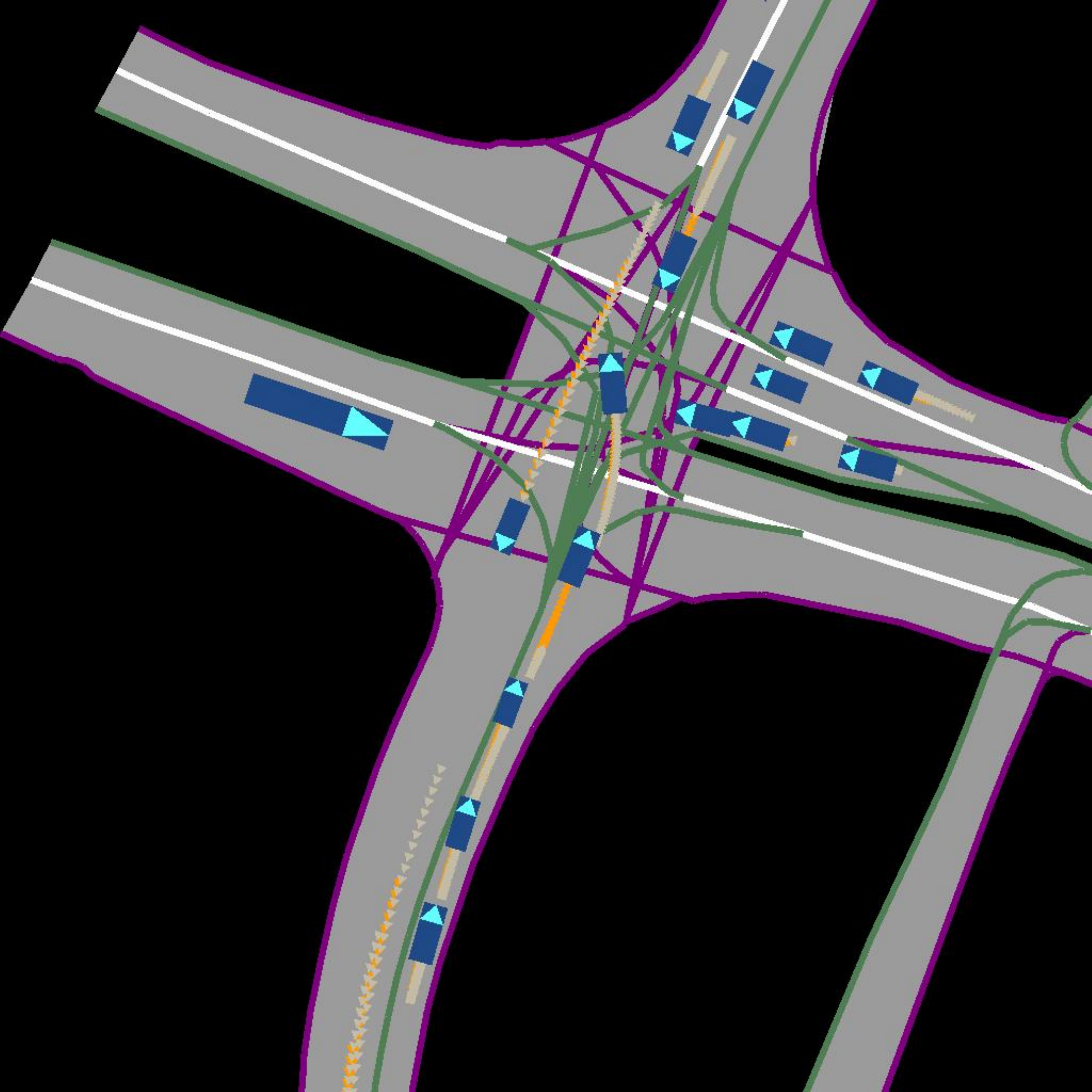}
       \caption*{Standard Diffusion}
        \label{}
    \end{subfigure}
    \hfill
    \begin{subfigure}[b]{0.32\textwidth}
        \centering
        \includegraphics[width=1\textwidth]{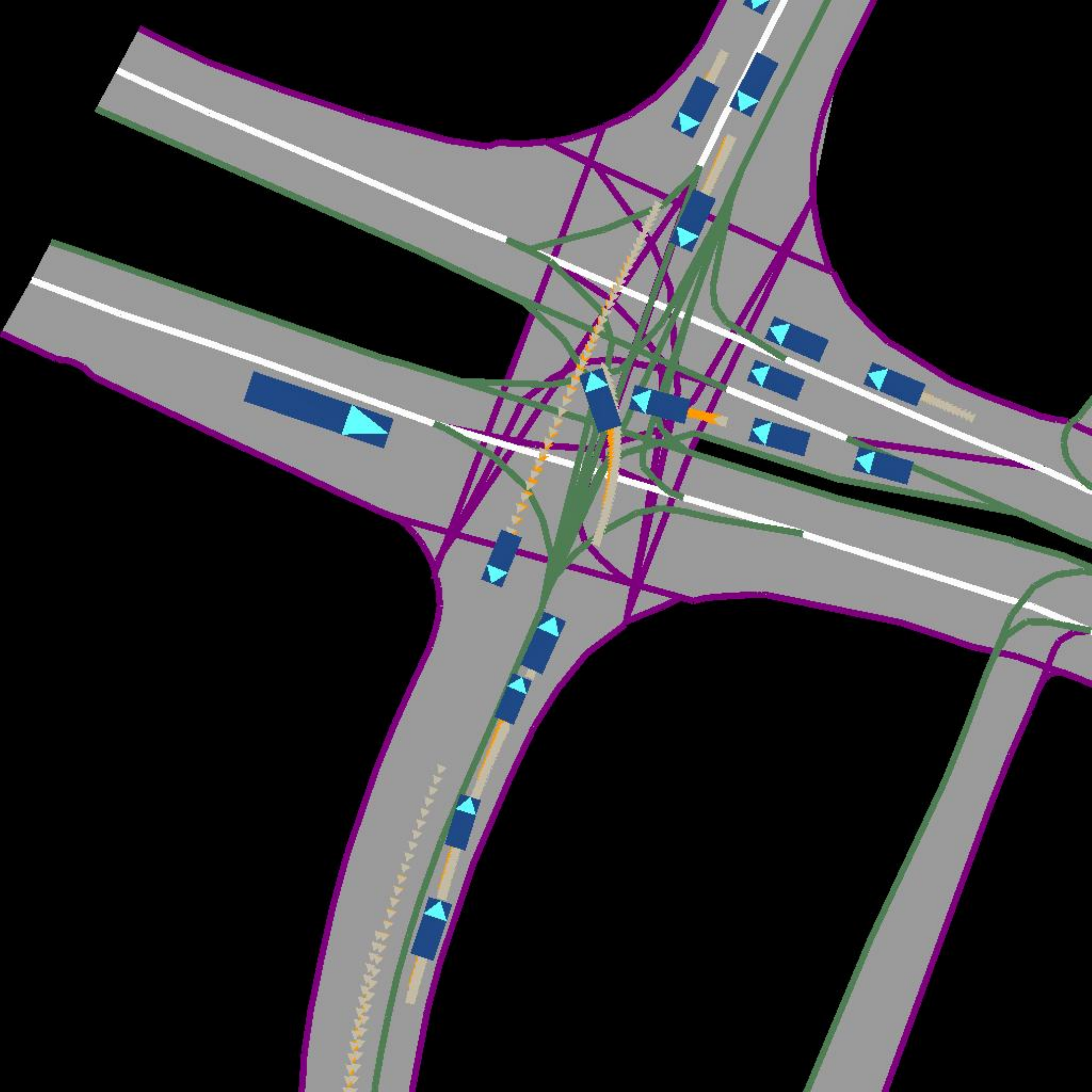}
        \caption*{MPGD w/o projection}
        \label{}
    \end{subfigure}
    \hfill
    \begin{subfigure}[b]{0.32\textwidth}
        \centering
        \includegraphics[width=1\textwidth]{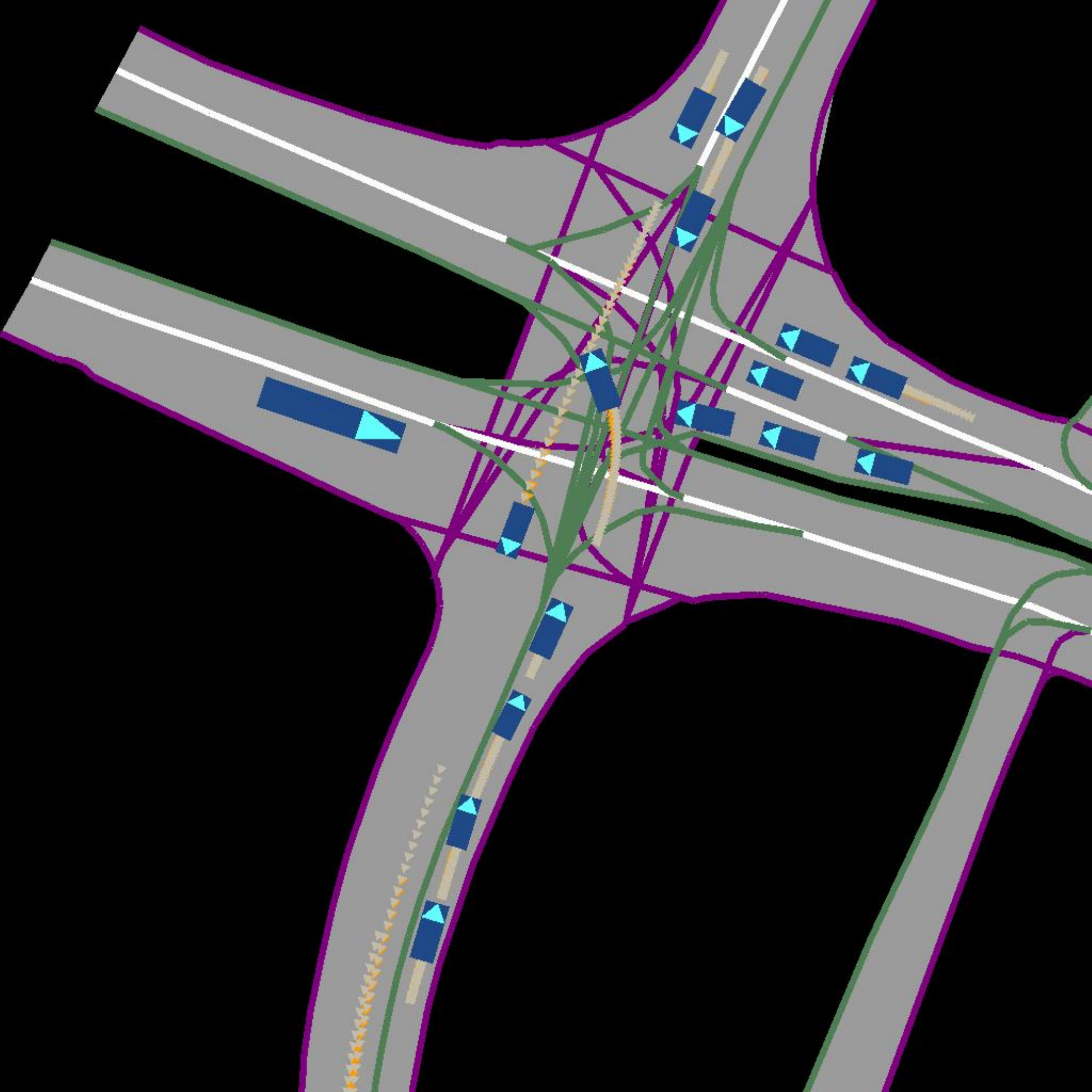}
        \caption*{\method{} (Ours)}
        \label{}
    \end{subfigure}
\end{figure*}

\end{document}